\documentclass[pdflatex,sn-mathphys-num]{sn-jnl}

\usepackage{amsmath,amssymb,amsfonts,mathtools}
\usepackage{amsthm}
\usepackage{booktabs}
\usepackage{url}

\theoremstyle{thmstyleone}
\newtheorem{theorem}{Theorem}[section]
\newtheorem{lemma}[theorem]{Lemma}
\newtheorem{corollary}[theorem]{Corollary}
\newtheorem{proposition}[theorem]{Proposition}

\theoremstyle{thmstyletwo}
\newtheorem{remark}[theorem]{Remark}

\theoremstyle{thmstylethree}
\newtheorem{definition}{Definition}[section]
\newtheorem{assumption}[definition]{Assumption}

\newcommand{\relu}{\operatorname{ReLU}}
\newcommand{\E}{\mathbb{E}}
\newcommand{\R}{\mathbb{R}}
\newcommand{\norm}[1]{\left\lVert #1 \right\rVert}
\newcommand{\abs}[1]{\left\lvert #1 \right\rvert}

\newcommand{\SigX}{\Sigma_X}

\newcommand{\SubBar}{\mathfrak b}

\begin{document}

\title[Approximation Rates and Barrier Decay]{From Approximation Rates to Loss-Landscape Barrier Decay in Shallow ReLU Networks}

\author*[1]{\fnm{Saveliy} \sur{Baturin}}\email{saveliy.xo@gmail.com}

\affil*[1]{\orgname{Independent Researcher}, \orgaddress{\country{Russia}}}

\abstract{We study pathwise connectivity of sublevel sets for one-hidden-layer ReLU networks with constrained first-layer weights and an $\ell_1$ penalty on the output layer. The data term is assumed convex and globally Lipschitz in the scalar logit. We first give a finite-width construction that connects any two points of a common sublevel through a path controlled by a loss-consistent compression functional and a first-order perturbation term. The proof replaces the quadratic perturbation estimate in the Freeman--Bruna mechanism by a direct Lipschitz bound. Positive homogeneity is then used in a direction that is compatible with the penalty: every active atom is moved monotonically from the unit ball to the unit sphere while its output coefficient is reduced. Sphere covering and cluster merging consequently give $O(m^{-1/(n-1)})$ fixed-level thickening for $n\ge2$, while the one-dimensional two-ray dictionary gives exact connectivity for every $m\ge4$. We also prove internally that the regularized approximation values satisfy $e(l)-e_\infty=O(l^{-1/2})$. More generally, a rate $O(l^{-s})$ transfers to a near-optimal barrier rate $O(m^{-s/((n-1)s+1)})$; under the standing assumptions, this yields the explicit rate $O(m^{-1/(n+1)})$. A theorem-aligned finite-distribution experiment complements the analysis. The primary Huber run yields a maximal best certified upper gap $1.66\times10^{-5}$ over 720 recorded pairs at widths $m\ge16$; a matched binary-cross-entropy rerun and a 720-endpoint dense-representation stress test probe loss robustness and the active cluster-merging mechanism.}

\keywords{neural-network approximation, loss landscape, sublevel-set connectivity, ReLU network, energy barrier, overparameterization}

\pacs[MSC Classification]{41A25, 68T07, 90C26}

\maketitle

\section{Introduction}\label{sec:introduction}

The objective of this paper is a quantitative question about a specific nonconvex model: how much must the regularized risk increase when two parameter points in the same sublevel set are joined by a continuous path?  The answer is expressed through an approximation quantity of the underlying shallow-network class.

Freeman and Bruna \cite{FreemanBruna2017} proved an asymptotic connectivity mechanism for half-rectified one-hidden-layer networks with quadratic loss. Their argument combines output-layer convexity, sparse compression of the hidden representation, and the occurrence of clusters of first-layer directions as width grows. The quadratic structure enters when the extended convex path is replaced by a path in the original parameter space. We show that this step has a direct first-order substitute whenever the loss is convex and globally Lipschitz in the logit.

The parameter constraint needs particular care. Positive homogeneity gives
\[
 a\relu(w^\top x)=\frac{a}{c}\relu((cw)^\top x),\qquad c>0,
\]
but this rescaling changes an $\ell_1$ penalty imposed only on $a$. Consequently, normalizing the first layer is not a harmless ``without loss of generality'' operation. We formulate the model on a product of closed Euclidean unit balls. This is a genuine architectural constraint, and its convexity ensures that the first-layer interpolations used below remain admissible.

The main contributions are the following.
\begin{enumerate}
\item We prove an attained, finite-width barrier bound for losses that are convex and Lipschitz in the logit. The proof includes an explicit perturbation estimate and an explicit relocation argument for inactive neurons.
\item We prove a uniform statement over all endpoint pairs in a fixed sublevel. A monotone sphericalization path reduces the relevant metric dimension from $n$ to $n-1$, giving $O(m^{-1/(n-1)})$ thickening for $n\ge2$ and exact connectivity for every $m\ge4$ when $n=1$.
\item We connect approximation theory to landscape topology. A Maurey-type sampling argument gives the model-internal rate $e(l)-e_\infty=O(l^{-1/2})$; under the standing assumptions, near-optimal barriers are therefore $O(m^{-1/(n+1)})$ for $n\ge2$. We also state the sharper transfer produced by any available rate $O(l^{-s})$.
\item We give a theorem-aligned computational diagnostic on finite seeded distributions. It verifies the parameter and compression invariants and compares direct interpolation, certified Dynamic String Sampling (DSS), and the proof-inspired path on 900 trained-endpoint pairs, with disjoint pairing inside each replicate--width--level group. We repeat the complete design with binary cross-entropy in logits and separately stress nontrivial merging on dense endpoint representations.
\end{enumerate}

The asymptotic results concern the evolution of the approximation class with width; they are not convergence theorems for gradient descent or claims about training-time dynamics. Apart from the explicit one-dimensional result at $m\ge4$, the paper does not assert exact finite-width connectivity or make a statement about arbitrary deep networks.

The remainder is organized as follows. Section~\ref{sec:related} positions the result within connectivity, mode-connectivity, and convex-reformulation work. Section~\ref{sec:setting} fixes the model and the barrier quantities. Section~\ref{sec:finite} proves the finite-width path theorem. Section~\ref{sec:rates} derives the fixed-level and approximation-driven rates. Section~\ref{sec:computational} reports the theorem-aligned computational diagnostic, and Section~\ref{sec:discussion} discusses scope and limitations.

\section{Related Work}\label{sec:related}

\subsection{Sublevel-set topology}
The closest point of departure is Freeman and Bruna \cite{FreemanBruna2017}. For a regularized half-rectified shallow model with quadratic loss, they bound the excess energy of a path through a sparse-approximation functional and a perturbation of hidden directions, and then obtain asymptotic connectivity from a covering argument. Our finite-width construction retains that compression--relocation architecture, but replaces the loss-specific quadratic perturbation step by the first-order Lipschitz bridge in Lemma~\ref{lem:bridge}. We also treat the closed first-layer ball as an architectural constraint and state uniform fixed-level and approximation-transfer rates.

Venturi, Bandeira, and Bruna \cite{Venturi2019} characterize spurious valleys of one-hidden-layer objectives through intrinsic dimension, whereas Nguyen \cite{Nguyen2019} proves exact connectedness of every empirical-loss sublevel for sufficiently overparameterized piecewise-linear deep networks under sample-size-dependent width conditions. These are exact or qualitative topology results under assumptions different from ours. Simsek et al.\ \cite{Simsek2021} instead describe connected manifolds of global minima generated by overparameterization and permutation symmetry. Our object is neither only the minimizer set nor a sample-size width threshold: it is the worst excess along paths between arbitrary points of a prescribed regularized-risk sublevel.

Nurisso, Leroy, and Vaccarino \cite{Nurisso2024} identify disconnected components of the quadric invariant sets imposed on shallow ReLU gradient flow by homogeneity. Their subsequent DAG-architecture analysis with Petri \cite{Nurisso2026} characterizes connectedness and singularities of the corresponding training-invariant learning spaces. This is complementary to our result: they study reachability constrained by gradient-flow conservation laws, whereas our paths range over the full constrained parameter sublevel and are not asserted to be optimization trajectories. Wu, Simsek, and Ged \cite{Wu2025} analyze directional stationary points, saddle escape, and width embeddings for shallow ReLU-like networks with empirical squared loss; our theorem instead concerns arbitrary sublevel endpoints and convex globally Lipschitz losses.

\subsection{Approximate connectivity and trained solutions}
Kuditipudi et al.~\cite{Kuditipudi2019} explain low-cost piecewise-linear connections from dropout and noise stability of trained networks. Shevchenko and Mondelli \cite{ShevchenkoMondelli2020} make the width dependence explicit for SGD solutions, deriving vanishing barriers from increasing dropout stability. Ferbach et al.~\cite{Ferbach2024} give a different probabilistic explanation: Wasserstein convergence of neuron empirical measures yields linear connectivity, after alignment, for sufficiently wide networks trained by SGD. These results condition on properties or distributions of trained solutions. Here no training algorithm appears in the theorem and the endpoints are arbitrary elements of a stated sublevel; the price is the narrower constrained shallow architecture and a generally nonlinear constructed path.

\subsection{Convex reformulations}
Convex reformulations expose additional structure in regularized two-layer ReLU training. Mishkin and Pilanci \cite{MishkinPilanci2023}, for example, characterized optimal sets and studied continuous regularization paths. Such results address optimal or stationary sets through a finite-data convex representation. The present analysis instead gives a direct path bound for expected risk, arbitrary sublevel endpoints, and a constrained dictionary of ReLU atoms.

\subsection{Empirical mode connectivity}
Low-loss paths between independently trained networks were first constructed at scale by Draxler et al.\ and Garipov et al.\ \cite{Draxler2018,Garipov2018}. A later line of work asks whether much simpler linear paths emerge after quotienting neuron-permutation symmetries. Entezari et al.\ \cite{Entezari2022} formulated and empirically tested the permutation-invariance conjecture; Ainsworth et al.\ \cite{Ainsworth2023} developed weight-matching algorithms for model merging; and Jordan et al.\ \cite{Jordan2023} showed that activation-variance collapse can remain after alignment and proposed a renormalization repair. More recently, Zhao et al.\ \cite{Zhao2025} related mode connectivity to continuous parameter symmetries and derived explicit symmetry-induced connecting curves. This literature concerns trained modes, alignment, and often deep architectures. It motivates the direct-interpolation baseline in Section~\ref{sec:computational}, but does not imply the uniform sublevel theorem proved here; conversely, our theorem neither identifies permutation alignments nor claims linear connectivity.

\subsection{Approximation theory}
Quantitative approximation by neural networks has a long history; representative entry points include Barron's sampling bounds \cite{Barron1993} and Pinkus's survey of multilayer perceptron approximation \cite{Pinkus1999}. Recent work develops optimal rates for shallow ReLU variation spaces \cite{YangZhou2025,Siegel2025}; convex Lipschitz losses also appear in contemporary constructive-approximation analyses of kernel approximations \cite{Lian2026}. Our approximation statement has a different purpose. Rather than posit a target-class norm and estimate its best function error, Lemma~\ref{lem:maurey} sparsifies near-minimizers of the exact regularized objective. This gives a universal $O(l^{-1/2})$ bound for $e(l)-e_\infty$ under the standing hypotheses and then transfers that bound to landscape topology. Faster target-specific estimates, when available for the same bias-free dictionary, distribution, loss, and penalty, can still be inserted into Corollary~\ref{cor:transfer}.

\section{Model and Quantities}\label{sec:setting}

Let
\[
B_2^n=\{w\in\R^n:\norm{w}_2\le 1\},
\qquad
\mathcal W_m=(B_2^n)^m\times\R^m.
\]
For $\vartheta=(W,\theta)\in\mathcal W_m$, where the rows of $W$ are $w_1,\ldots,w_m$, define
\begin{equation}
\Phi_m(x;W,\theta)
=\sum_{i=1}^m\theta_i\relu(w_i^\top x).
\label{eq:model}
\end{equation}
The first-layer ball constraint is part of the model. In particular, the objective below would be ill posed under unrestricted positive rescaling of $W$ with a penalty only on $\theta$.

Let $(X,Y)$ have distribution $P$, and let
\begin{equation}
R_m(W,\theta)
=\E\!\left[\mathcal L\!\left(Y,\Phi_m(X;W,\theta)\right)\right],
\qquad
F_m(W,\theta)=R_m(W,\theta)+\kappa\norm{\theta}_1,
\label{eq:objective}
\end{equation}
where $\kappa>0$. Write $\SigX=\E[XX^\top]$ and
\[
D_X=\sqrt{\norm{\SigX}_{\mathrm{op}}}.
\]

\begin{assumption}[Standing hypotheses]\label{ass:standing}
The second moment of $X$ is finite, and $\E[\abs{\mathcal L(Y,0)}]<\infty$. For $P_Y$-almost every $y$, the map $u\mapsto\mathcal L(y,u)$ is convex and $L$-Lipschitz, with the same finite constant $L$. There is a finite constant $\underline R$ such that
\[
R_m(W,\theta)\ge \underline R
\]
for every width $m$ and every $(W,\theta)\in\mathcal W_m$.
\end{assumption}

For example, binary cross-entropy with a scalar logit $u$ and label $y\in\{0,1\}$ is
\begin{equation}
\mathcal L_{\rm BCE}(y,u)=\log(1+e^u)-yu.
\label{eq:bce-logit}
\end{equation}
It is convex and globally $1$-Lipschitz in the logit because
$\partial_u\mathcal L_{\rm BCE}(y,u)=\sigma(u)-y\in[-1,1]$.
Thus no bounded-logit assumption is needed. Quadratic loss, by contrast, is not globally Lipschitz.

\begin{definition}[Sublevel and pairwise barrier]
For $\lambda\in\R$, set
\[
\Omega_m(\lambda)=\{\vartheta\in\mathcal W_m:F_m(\vartheta)\le\lambda\}.
\]
For $\vartheta^A,\vartheta^B\in\mathcal W_m$, define
\begin{equation}
\operatorname{bar}_m(\vartheta^A,\vartheta^B)
=\inf_{\gamma}\left(
\max_{t\in[0,1]}F_m(\gamma(t))
-\max\{F_m(\vartheta^A),F_m(\vartheta^B)\}
\right),
\label{eq:pair-barrier}
\end{equation}
where the infimum is over continuous paths in $\mathcal W_m$ with the prescribed endpoints.
\end{definition}

For a uniform sublevel statement it is convenient to measure thickening above the specified level, rather than above the individual endpoint values.

\begin{definition}[Sublevel thickening]\label{def:thickening}
Define
\begin{equation}
\SubBar_m(\lambda)
=\sup_{\vartheta^A,\vartheta^B\in\Omega_m(\lambda)}
\inf_{\gamma}
\left(\max_{t\in[0,1]}F_m(\gamma(t))-\lambda\right)_+.
\label{eq:thickening}
\end{equation}
If the sublevel is empty, the supremum is set to zero.
\end{definition}

The approximation values used below are
\begin{equation}
e(l)
=\min_{(U,\xi)\in\mathcal W_l}F_l(U,\xi),
\qquad
e_\infty=\lim_{l\to\infty}e(l).
\label{eq:e-l}
\end{equation}
The limit exists because $e(l)$ is nonincreasing: an $l$-neuron model embeds into width $l+1$ by adding a zero output coefficient.

For $W\in(B_2^n)^m$, an active-neuron budget $q\le m$, and $r\ge0$, define the loss-consistent robust compression value
\begin{equation}
\delta_W^{\mathcal L}(q,r;m)
=\min_{\substack{\gamma\in\R^m,\ \norm{\gamma}_0\le q\\
\widehat W\in(B_2^n)^m,\ \max_i\norm{\hat w_i-w_i}_2\le r}}
F_m(\widehat W,\gamma).
\label{eq:delta}
\end{equation}
Unlike an angular definition on the sphere, \eqref{eq:delta} is compatible with convex interpolation inside the parameter domain.

\begin{lemma}[Attainment and output-layer control]\label{lem:attainment}
Under Assumption~\ref{ass:standing}, the minima in \eqref{eq:e-l} and \eqref{eq:delta} are attained. Moreover,
\begin{equation}
F_m(W,\theta)\le s
\quad\Longrightarrow\quad
\norm{\theta}_1\le\frac{s-\underline R}{\kappa}.
\label{eq:l1-control}
\end{equation}
\end{lemma}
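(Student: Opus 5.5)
The $\ell_1$ control \eqref{eq:l1-control} is immediate, so I would dispose of it first. If $F_m(W,\theta)\le s$, then $\kappa\norm{\theta}_1=F_m(W,\theta)-R_m(W,\theta)\le s-\underline R$ by the uniform lower bound in Assumption~\ref{ass:standing}. Dividing by $\kappa>0$ gives the claim.

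For attainment I would use the direct method: a continuous function on a nonempty compact set attains its minimum. The plan has three steps.

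\emph{Finiteness.} First, $F_m$ is finite everywhere on $\mathcal W_m$. Lipschitz continuity gives
\[
\abs{\mathcal L(Y,u)}\le\abs{\mathcal L(Y,0)}+L\abs{u}.
\]
Moreover $\abs{\Phi_m(X;W,\theta)}\le\norm{\theta}_1\norm{X}_2$, because $\relu(w_i^\top x)\le\norm{w_i}_2\norm{x}_2\le\norm{x}_2$. Since $\E\norm{X}_2\le(\E\norm{X}_2^2)^{1/2}<\infty$, this gives
\[
R_m(W,\theta)\le\E\abs{\mathcal L(Y,0)}+L\norm{\theta}_1\E\norm{X}_2 .
\]

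\emph{Continuity.} Next, $R_m$ is continuous, indeed locally Lipschitz, on $\mathcal W_m$. For two parameter points,
\[
\abs{R_m(\vartheta)-R_m(\vartheta')}\le L\,\E\abs{\Phi_m(X;\vartheta)-\Phi_m(X;\vartheta')}.
\]
Inside the sum I split $\theta_i\relu(w_i^\top x)-\theta_i'\relu(w_i'^\top x)$ into a term with $(\theta_i-\theta_i')$ and a term with $\theta_i'\bigl(\relu(w_i^\top x)-\relu(w_i'^\top x)\bigr)$. Using that $\relu$ is $1$-Lipschitz, this yields
\[
\abs{\Phi_m(x;\vartheta)-\Phi_m(x;\vartheta')}\le\norm{x}_2\Bigl(\norm{\theta-\theta'}_1+\norm{\theta'}_1\max_i\norm{w_i-w_i'}_2\Bigr).
\]
Taking expectations gives local Lipschitz continuity, and adding $\kappa\norm{\theta}_1$ keeps $F_m$ continuous.

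\emph{Compactness.} For \eqref{eq:e-l} the feasible set $\mathcal W_l$ is unbounded in $\theta$. I would restrict to the sublevel $\{F_l\le F_l(0,0)\}$, which is nonempty since it contains the origin. By \eqref{eq:l1-control} this sublevel lies in $(B_2^n)^l\times\{\norm{\xi}_1\le(F_l(0,0)-\underline R)/\kappa\}$, and it is closed by continuity. It is therefore compact, and the minimum is attained.

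For \eqref{eq:delta} the feasible set is the product of two pieces. The first is $\{\widehat W\in(B_2^n)^m:\max_i\norm{\hat w_i-w_i}_2\le r\}$, which is compact and nonempty because it contains $W$. The second is $\{\gamma:\norm{\gamma}_0\le q\}$, which is closed, being a finite union of coordinate subspaces. I would intersect with the sublevel $\{F_m\le F_m(W,0)\}$; the point $(W,0)$ is feasible, so this intersection is nonempty. The result is compact by the same $\ell_1$ bound, so the minimum is attained.

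The only genuinely delicate step is the continuity of $R_m$ under expectation. This requires the integrable envelope $\norm{X}_2$, which the finite second moment provides, and the uniform Lipschitz constant $L$ across $y$. I expect no real obstacle beyond recording these estimates. It is also worth noting that the finite bound $\underline R$ is what makes the sublevels bounded in $\theta$.
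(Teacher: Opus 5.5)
Your proposal is correct and follows essentially the same route as the paper: you get the $\ell_1$ control from the uniform lower bound $\underline R$, then run the direct method using the resulting boundedness in $\theta$, compactness and closedness of the first-layer and sparsity constraints, and continuity from the Lipschitz loss together with the ReLU Lipschitz estimate. The differences are cosmetic. You apply Weierstrass on an explicit nonempty compact sublevel (after checking finiteness) and use the envelope $\norm{X}_2$. The paper instead extracts a convergent subsequence from a minimizing sequence and bounds the predictor difference with $D_X$.
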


\begin{proof}
The implication \eqref{eq:l1-control} follows from
\[
s\ge R_m(W,\theta)+\kappa\norm{\theta}_1
\ge\underline R+\kappa\norm{\theta}_1.
\]
For attainment, take a minimizing sequence. Its objective values are bounded above, so \eqref{eq:l1-control} bounds its output coefficients. The first-layer variables range over a compact set, and the sparsity constraint in \eqref{eq:delta} is a finite union of closed coordinate subspaces. The additional rowwise distance constraint in \eqref{eq:delta} is closed. Hence a convergent subsequence $(W_j,\theta_j)\to(W,\theta)$ exists. On a set where the output coefficients are bounded, the ReLU inequality
\[
\E\abs{\relu(a^\top X)-\relu(b^\top X)}
\le D_X\norm{a-b}_2
\label{eq:relu-lipschitz}
\]
gives the explicit joint bound
\[
\E\abs{\Phi_m(X;W_j,\theta_j)-\Phi_m(X;W,\theta)}
\le D_X\norm{\theta_j-\theta}_1
   +D_X\norm{\theta}_1\max_i\norm{w_{j,i}-w_i}_2,
\]
because every first-layer row has norm at most one. The right-hand side tends to zero. The Lipschitz property of $\mathcal L$ therefore implies continuity of the expected risk, while the $\ell_1$ term is continuous in finite dimension. The subsequential limit attains the infimum.
\end{proof}

\section{A Finite-Width Path Bound}\label{sec:finite}

We first isolate the perturbation step that replaces the quadratic estimate.

\begin{lemma}[Lipschitz bridge]\label{lem:bridge}
Let $(W,\beta),(\widehat W,\gamma)\in\mathcal W_m$ satisfy
\[
\max_i\norm{w_i-\hat w_i}_2\le r,
\qquad
F_m(W,\beta)\le\Lambda,
\qquad
F_m(\widehat W,\gamma)\le\Lambda.
\]
Then the path
\[
W_t=(1-t)W+t\widehat W,
\qquad
\theta_t=(1-t)\beta+t\gamma
\]
lies in $\mathcal W_m$ and satisfies
\begin{equation}
F_m(W_t,\theta_t)
\le
\Lambda+\frac{LD_X}{\kappa}r(\Lambda-\underline R),
\qquad 0\le t\le1.
\label{eq:bridge-bound}
\end{equation}
\end{lemma}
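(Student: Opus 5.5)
The plan is to compare $F_m$ along the straight-line path with the convex combination $(1-t)F_m(W,\beta)+tF_m(\widehat W,\gamma)\le\Lambda$, and to pay only a first-order price for the mismatch between the network at interpolated parameters and the interpolation of the two networks. Admissibility is immediate: each $w_{t,i}=(1-t)w_i+t\hat w_i$ is a convex combination of points of $B_2^n$, which is convex, and $\theta_t\in\R^m$. So $(W_t,\theta_t)\in\mathcal W_m$. The penalty term is handled by convexity of the norm:
\[
\norm{\theta_t}_1\le(1-t)\norm{\beta}_1+t\norm{\gamma}_1 .
\]

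For the risk, I write $\Phi^A=\Phi_m(X;W,\beta)$, $\Phi^B=\Phi_m(X;\widehat W,\gamma)$ and $\Phi_t=\Phi_m(X;W_t,\theta_t)$. The key algebraic identity, checked neuronwise, is
\[
\Phi_t-(1-t)\Phi^A-t\Phi^B=\sum_{i=1}^m\Bigl[(1-t)\beta_i\bigl(\relu(w_{t,i}^\top X)-\relu(w_i^\top X)\bigr)+t\gamma_i\bigl(\relu(w_{t,i}^\top X)-\relu(\hat w_i^\top X)\bigr)\Bigr].
\]
Since $\relu$ is $1$-Lipschitz, Cauchy--Schwarz gives
\[
\E\abs{\relu(a^\top X)-\relu(b^\top X)}\le\E\abs{(a-b)^\top X}\le\sqrt{(a-b)^\top\SigX(a-b)}\le D_X\norm{a-b}_2 .
\]
This is the same bound as in the proof of Lemma~\ref{lem:attainment}. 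We also have $\norm{w_{t,i}-w_i}_2=t\norm{\hat w_i-w_i}_2\le tr$ and $\norm{w_{t,i}-\hat w_i}_2\le(1-t)r$. Together these yield
\[
\E\abs{\Phi_t-(1-t)\Phi^A-t\Phi^B}\le D_X\,r\,t(1-t)\bigl(\norm{\beta}_1+\norm{\gamma}_1\bigr).
\]

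Next I use the two properties of the loss in turn. By $L$-Lipschitzness in the logit,
\[
R_m(W_t,\theta_t)\le\E\,\mathcal L\bigl(Y,(1-t)\Phi^A+t\Phi^B\bigr)+L D_X r\,t(1-t)\bigl(\norm{\beta}_1+\norm{\gamma}_1\bigr).
\]
Convexity of $u\mapsto\mathcal L(y,u)$ then bounds the first term by $(1-t)R_m(W,\beta)+tR_m(\widehat W,\gamma)$. All expectations are finite: $\abs{\mathcal L(Y,u)}\le\abs{\mathcal L(Y,0)}+L\abs u$, and each $\Phi$ is integrable because $X$ has a finite second moment and every row has norm at most one.

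Adding the penalty, I get
\[
F_m(W_t,\theta_t)\le(1-t)F_m(W,\beta)+tF_m(\widehat W,\gamma)+LD_Xr\,t(1-t)\bigl(\norm{\beta}_1+\norm{\gamma}_1\bigr).
\]
Finally I invoke \eqref{eq:l1-control} from Lemma~\ref{lem:attainment} with $s=\Lambda$ for both endpoints, so that $\norm{\beta}_1+\norm{\gamma}_1\le 2(\Lambda-\underline R)/\kappa$. Using $t(1-t)\le 1/4$, this gives an excess of at most $\tfrac12 LD_X r(\Lambda-\underline R)/\kappa$, which is stronger than \eqref{eq:bridge-bound}.

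The only delicate step is the neuronwise decomposition. In particular, I must split each term so that the $\beta$-part is compared with $w_i$ and the $\gamma$-part with $\hat w_i$, which is what produces the $t(1-t)$ weights. The integrability bookkeeping also needs care, so that the convexity step is applied to finite quantities; if $\Lambda<\underline R$ the hypotheses are vacuous, since then no point satisfies $F_m\le\Lambda$. Everything else is routine.
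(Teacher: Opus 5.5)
Your proof is correct and follows the paper's argument: you compare against the extended convex logit, split neuronwise, apply the $L^1$ ReLU estimate and the Lipschitz loss, and finish with the $\ell_1$ control of Lemma~\ref{lem:attainment}. The only difference is that you use $\norm{w_{t,i}-w_i}_2\le tr$ and $\norm{w_{t,i}-\hat w_i}_2\le(1-t)r$ where the paper uses the cruder bound $r$, which gives the $t(1-t)$ weight and an excess of $\tfrac12\frac{LD_X}{\kappa}r(\Lambda-\underline R)$, a factor-two sharpening of \eqref{eq:bridge-bound}.
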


\begin{proof}
The unit ball is convex, so every row of $W_t$ is admissible. Introduce the extended convex logit
\[
g_t(X)=(1-t)\Phi_m(X;W,\beta)
+t\Phi_m(X;\widehat W,\gamma).
\]
Convexity of the loss and of the $\ell_1$ norm gives
\begin{equation}
\E[\mathcal L(Y,g_t(X))]
+\kappa\big((1-t)\norm{\beta}_1+t\norm{\gamma}_1\big)
\le\Lambda.
\label{eq:extended-convex}
\end{equation}
For $w_{t,i}=(1-t)w_i+t\hat w_i$, the difference between the genuine and extended logits is
\begin{align*}
\Phi_m(X;W_t,\theta_t)-g_t(X)
={}&\sum_i(1-t)\beta_i
\bigl(\relu(w_{t,i}^\top X)-\relu(w_i^\top X)\bigr)\\
&+\sum_i t\gamma_i
\bigl(\relu(w_{t,i}^\top X)-\relu(\hat w_i^\top X)\bigr).
\end{align*}
Using \eqref{eq:relu-lipschitz}, $\norm{w_{t,i}-w_i}\le r$, and $\norm{w_{t,i}-\hat w_i}\le r$, its expected absolute value is at most
\[
rD_X\big((1-t)\norm{\beta}_1+t\norm{\gamma}_1\big).
\]
Lemma~\ref{lem:attainment} bounds both $\ell_1$ norms by $(\Lambda-\underline R)/\kappa$. Combining the Lipschitz estimate for $\mathcal L$ with \eqref{eq:extended-convex}, and using $\norm{\theta_t}_1\le(1-t)\norm{\beta}_1+t\norm{\gamma}_1$, proves \eqref{eq:bridge-bound}.
\end{proof}

\begin{theorem}[Finite-width connectivity bound]\label{thm:finite}
Assume Assumption~\ref{ass:standing}. Let $\vartheta^A=(W_A,\theta_A)$ and $\vartheta^B=(W_B,\theta_B)$ belong to $\Omega_m(\lambda)$. For every integer $1\le l\le m$ and every $r\ge0$, there is a continuous path $\gamma_{l,r}$ in $\mathcal W_m$ joining the endpoints such that
\begin{equation}
\max_{t\in[0,1]}F_m(\gamma_{l,r}(t))
\le
\Lambda_m(l,r;\lambda)
+\frac{LD_X}{\kappa}r
\bigl(\Lambda_m(l,r;\lambda)-\underline R\bigr),
\label{eq:finite-bound}
\end{equation}
where
\begin{equation}
\Lambda_m(l,r;\lambda)
=\max\!\left\{
\lambda,
e(l),
\delta_{W_A}^{\mathcal L}(m-l,r;m),
\delta_{W_B}^{\mathcal L}(m-l,r;m)
\right\}.
\label{eq:Lambda}
\end{equation}
\end{theorem}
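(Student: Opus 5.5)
I plan to follow the Freeman--Bruna compression--relocation scheme, with Lemma~\ref{lem:bridge} handling every step in which first-layer weights move. Write $\Lambda=\Lambda_m(l,r;\lambda)$. By Lemma~\ref{lem:attainment}, choose minimizers $(\widehat W_A,\gamma_A)$ and $(\widehat W_B,\gamma_B)$ of $\delta^{\mathcal L}_{W_A}(m-l,r;m)$ and $\delta^{\mathcal L}_{W_B}(m-l,r;m)$. Each has at most $m-l$ nonzero output coefficients, rows within distance $r$ of $W_A$ (resp.\ $W_B$), and objective at most $\Lambda$. Also choose a minimizer $(U,\xi)\in\mathcal W_l$ attaining $e(l)\le\Lambda$. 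The path is a concatenation of five segments:
\[
\vartheta^A\to(\widehat W_A,\gamma_A)\to\vartheta^{A}_{*}\to\vartheta^{B}_{*}\to(\widehat W_B,\gamma_B)\to\vartheta^B,
\]
where $\vartheta^A_*$ and $\vartheta^B_*$ are width-$m$ embeddings of the $l$-neuron minimizer described below.

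The first and last segments are exactly the bridge of Lemma~\ref{lem:bridge}. Both endpoints have objective at most $\Lambda$, since $F_m(\vartheta^A)\le\lambda\le\Lambda$, and the rows differ by at most $r$. So along these segments the objective is bounded by the right-hand side of \eqref{eq:finite-bound}.

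For the middle segments, let $S_A$ be the support of $\gamma_A$, so $|S_A|\le m-l$, and fix a set $T_A$ of $l$ indices in its complement. First move the rows of $\widehat W_A$ indexed by $T_A$ linearly to the rows of $U$, keeping the ball constraint by convexity. These neurons have zero output weight, so the function and the objective do not change. Next, linearly interpolate the outputs from $\gamma_A$ to the vector $\tilde\xi$ that equals $\xi$ on $T_A$ and zero elsewhere, with $W$ fixed. Since $\Phi_m$ is linear in $\theta$ for fixed $W$, the objective is convex along this segment, so its maximum is $\max\{F_m(\widehat W_A,\gamma_A),e(l)\}\le\Lambda$. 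Call the result $\vartheta^A_*$. Now every neuron outside $T_A$ is inactive.

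Do the same from the $B$ side with an index set $T_B$, giving $\vartheta^B_*$. To join $\vartheta^A_*$ and $\vartheta^B_*$, which compute the same function but through possibly different index sets $T_A$ and $T_B$, I will realize a permutation of neurons by elementary swaps that keep the objective at $e(l)$. To move an active neuron from index $i$ to a free index $j$:
\begin{enumerate}
\item move row $j$ to row $i$, which is free because $j$ is inactive;
\item shift the output mass $\theta_i$ to $\theta_j$ linearly, which leaves the function unchanged because the two rows are identical, and leaves $\norm{\theta}_1$ unchanged because both weights keep the sign of $\theta_i$;
\item release row $i$, which is now inactive.
\end{enumerate}
Any transport of $T_A$ onto $T_B$ can be carried out this way using the at least $m-l\ge0$ spare indices, with plain identity moves when $T_A=T_B$.

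I expect this relocation and permutation bookkeeping to be the main obstacle. In particular, when $l=m$ there are no free slots, so swaps are impossible. That case needs separate handling: $\delta(0,r;m)$ forces $\gamma=0$, and one can take $T_A=T_B=\{1,\dots,m\}$ with the common representation $(U,\xi)$ in the same index order, so no permutation is needed. Every segment then stays below $\Lambda$, or below the bridge bound, which proves \eqref{eq:finite-bound}.
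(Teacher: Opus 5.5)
Your proposal is correct and follows essentially the paper's argument: compress via a minimizer of $\delta^{\mathcal L}_{W_A}(m-l,r;m)$ reached through Lemma~\ref{lem:bridge}, relocate $l$ freed rows onto a minimizer of $e(l)$ and interpolate the outputs convexly, permute through a spare zero-coefficient coordinate, and treat $l=m$ separately. Your direct bridge from $\vartheta^A$ is legitimate because $F_m(\vartheta^A)\le\lambda\le\Lambda$, so the paper's preliminary output-optimization segment to $\beta_A$ is not needed, and the only detail to state explicitly is that after transporting $T_A$ onto $T_B$ you move the leftover zero-coefficient rows, at constant objective, to the rows of $\widehat W_B$ outside $T_B$, so that you reach the labeled point $\vartheta^B_*$ itself and not merely the same predictor.
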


\begin{proof}
We give the construction for the $A$ endpoint and then reverse its analogue for $B$.

First keep $W_A$ fixed and linearly interpolate $\theta_A$ to a minimizer $\beta_A$ of $\delta_{W_A}^{\mathcal L}(m,0;m)$. Both endpoints have objective at most $\lambda$, since $(W_A,\theta_A)$ is feasible in that minimization. Convexity in the logit and of the $\ell_1$ norm keeps the whole segment below $\lambda$.

Let $(\widehat W_A,\eta_A)$ attain $\delta_{W_A}^{\mathcal L}(m-l,r;m)$. Lemma~\ref{lem:bridge} connects $(W_A,\beta_A)$ to $(\widehat W_A,\eta_A)$ below the right-hand side of \eqref{eq:finite-bound}.

Let $(U^\star,\xi^\star)\in\mathcal W_l$ attain $e(l)$. Since $\eta_A$ has at most $m-l$ nonzero coordinates, at least $l$ first-layer rows carry zero output coefficients. Move those rows, at constant predictor and constant objective, to the rows of $U^\star$. With this combined first layer fixed, linearly interpolate the output vector from $\eta_A$ to the corresponding embedding of $\xi^\star$. Convexity bounds this segment by
\[
\max\{\delta_{W_A}^{\mathcal L}(m-l,r;m),e(l)\}\le\Lambda_m(l,r;\lambda).
\]

It remains to ensure that the $A$ and $B$ constructions reach the same parameter point, not merely the same predictor. If $l<m$, there is at least one zero output coordinate after the preceding interpolation. Such a coordinate is a buffer: move its first-layer row to an active row $u$, transfer a coefficient $a$ continuously by replacing $(a,0)$ with $((1-t)a,ta)$ on the two now-identical rows, and then move the newly inactive row freely. Throughout this move the predictor is identical and the penalty equals $|(1-t)a|+|ta|=|a|$. Repeating the move implements any required finite permutation and places the reference atoms in a fixed set of $l$ coordinates and a fixed order. All remaining zero-coefficient rows are then moved to the zero vector.

If $l=m$, then $m-l=0$ forces $\eta_A=\eta_B=0$. On each side all rows may therefore be moved, at constant zero predictor, directly to $U^\star$ in the same prescribed order before interpolating the output coefficients to $\xi^\star$. Thus in both cases the two one-sided constructions reach the identical labeled parameter point, namely the canonical embedding of $(U^\star,\xi^\star)$.

Concatenating the $A$ path with the reverse $B$ path proves the result.
\end{proof}

\begin{remark}[What is new in the bridge]\label{rem:novelty}
The combinatorial path architecture follows the mechanism of Freeman and Bruna \cite{FreemanBruna2017}. The loss-specific step is Lemma~\ref{lem:bridge}: global Lipschitz continuity converts a logit perturbation directly into a first-order risk perturbation. No quadratic expansion is used.
\end{remark}

\begin{lemma}[Monotone sphericalization]\label{lem:sphericalize}
Every $(W,\theta)\in\mathcal W_m$ is connected to a point $(W^\circ,\theta^\circ)\in\mathcal W_m$ such that
\begin{equation}
\theta_i^\circ\ne0
\quad\Longrightarrow\quad
\norm{w_i^\circ}_2=1,
\label{eq:spherical-active}
\end{equation}
the predictor is constant along the path, and the objective is nonincreasing. In particular,
$F_m(W^\circ,\theta^\circ)\le F_m(W,\theta)$ and
$\norm{\theta^\circ}_1\le\norm{\theta}_1$.
\end{lemma}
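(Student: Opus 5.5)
The plan is to rescale each neuron separately and keep each atom $\theta_i\relu(w_i^\top x)$ fixed along the path, using positive homogeneity. For an active neuron with $w_i\ne0$, set $c_i(t)=1+t\bigl(1/\norm{w_i}_2-1\bigr)$ for $t\in[0,1]$. Define
\[
w_i(t)=c_i(t)w_i,\qquad \theta_i(t)=\theta_i/c_i(t).
\]
Since $c_i(t)\ge1$ and increases monotonically to $1/\norm{w_i}_2$, the norm $\norm{w_i(t)}_2$ increases from $\norm{w_i}_2$ to $1$. Hence every row stays in $B_2^n$ and ends on the unit sphere. The map $t\mapsto(w_i(t),\theta_i(t))$ is continuous. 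Positive homogeneity gives $\theta_i(t)\relu(w_i(t)^\top x)=\theta_i\relu(w_i^\top x)$ for every $x$, so the predictor $\Phi_m$, and therefore $R_m$, is constant along this segment. Meanwhile $\abs{\theta_i(t)}=\abs{\theta_i}/c_i(t)$ is nonincreasing. So $F_m$ is nonincreasing and $\norm{\theta(t)}_1\le\norm{\theta}_1$. All active neurons with nonzero rows can be moved simultaneously with the same parameter $t$.

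The remaining case, and the only real obstacle, is an active neuron with $\theta_i\ne0$ but $w_i=0$. Rescaling cannot put such a row on the sphere. Its atom is identically zero, however. I would first linearly shrink $\theta_i$ to $0$ with $w_i=0$ fixed. The predictor is unchanged because $\relu(0)=0$, and $\abs{\theta_i}$ decreases, so $F_m$ is nonincreasing. After this step the neuron is inactive, so \eqref{eq:spherical-active} imposes no condition on it.

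Inactive neurons ($\theta_i=0$) are left untouched. The final path concatenates the zero-row deactivation with the simultaneous rescaling. Both pieces keep the predictor constant and have nonincreasing penalty, so the endpoint $(W^\circ,\theta^\circ)$ satisfies \eqref{eq:spherical-active}, together with $F_m(W^\circ,\theta^\circ)\le F_m(W,\theta)$ and $\norm{\theta^\circ}_1\le\norm{\theta}_1$. The only subtle point is the direction of the rescaling. We scale $w_i$ outward and $\theta_i$ down, never the reverse. This is what makes the move compatible with the $\ell_1$ penalty on $\theta$ alone and with the ball constraint on $W$.
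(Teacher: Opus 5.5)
Your proposal is correct and follows essentially the same argument as the paper. The paper uses the same outward rescaling $c_i(t)=(1-t)+t\rho_i^{-1}$ with $\theta_i/c_i(t)$, and the same linear deactivation of active zero rows. The only difference is that you move all nonzero active rows simultaneously, whereas the paper moves them one row at a time; this is equally valid because each atom is individually constant and each $\abs{\theta_i(t)}$ is individually nonincreasing.
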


\begin{proof}
It is enough to treat one row at a time. If $\theta_i=0$, no move is needed. If $\theta_i\ne0$ but $w_i=0$, linearly set $\theta_i$ to zero; the predictor is unchanged because $\relu(0)=0$, and the penalty decreases. Finally, suppose $0<\rho_i:=\norm{w_i}_2<1$. For $t\in[0,1]$ put
\[
c_i(t)=(1-t)+t\rho_i^{-1},
\qquad
w_i(t)=c_i(t)w_i,
\qquad
\theta_i(t)=\theta_i/c_i(t).
\]
Then $1\le c_i(t)\le\rho_i^{-1}$, so $\norm{w_i(t)}_2\le1$. Positive homogeneity gives
$\theta_i(t)\relu(w_i(t)^\top x)=\theta_i\relu(w_i^\top x)$ for every $x$, while $\abs{\theta_i(t)}$ is nonincreasing. The endpoint has first-layer norm one. Concatenating these finitely many rowwise paths proves the claim.
\end{proof}

\section{Barrier Rates and Approximation Transfer}\label{sec:rates}

The preceding lemma uses homogeneity only in the penalty-compatible direction: radii grow to one while output coefficients shrink. It is not a reversible ``without loss of generality'' identification of the ball with the sphere. It does, however, reduce the covering step to the active directional dictionary.

\begin{lemma}[Metric entropy of the active sphere]\label{lem:sphere-cover}
For every $n\ge2$ there is a constant $c_n<\infty$ such that, for every integer $N\ge1$, the sphere $S^{n-1}$ can be covered by at most $N$ sets of Euclidean diameter at most
\begin{equation}
c_nN^{-1/(n-1)}.
\label{eq:sphere-cover}
\end{equation}
Consequently, its Euclidean covering number, defined as the minimum number of
radius-$\varepsilon$ Euclidean balls needed to cover the sphere, satisfies
\[
\mathcal N(S^{n-1},\varepsilon)
\le C_n\varepsilon^{-(n-1)},
\qquad 0<\varepsilon\le2,
\]
for a constant $C_n$ depending only on $n$.
\end{lemma}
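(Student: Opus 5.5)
The plan is to prove the second (covering-number) statement first by a volumetric packing argument, and then derive the first (diameter) statement from it by an elementary inversion in $N$.

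\textbf{Step 1: packing bound.} Fix $0<\varepsilon\le 2$ and let $\{x_1,\dots,x_M\}\subset S^{n-1}$ be a maximal $\varepsilon$-separated set. Maximality means the balls $B(x_j,\varepsilon)$ cover $S^{n-1}$, so $\mathcal N(S^{n-1},\varepsilon)\le M$. It remains to bound $M$.

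\textbf{Step 2: the volume comparison.} This is the only step requiring care, since it must produce the sphere exponent $n-1$ rather than the ambient exponent $n$. I would use the shell $A_\varepsilon=\{y:1-\varepsilon/4\le\norm{y}_2\le 1+\varepsilon/4\}$. The balls $B(x_j,\varepsilon/4)$ are pairwise disjoint, because their centres are $\varepsilon$-separated. Each of them lies inside $A_\varepsilon$, by the triangle inequality applied to $\norm{y}_2$. Hence
\[
M\,\omega_n(\varepsilon/4)^n\le \omega_n\bigl((1+\varepsilon/4)^n-(1-\varepsilon/4)^n\bigr),
\]
where $\omega_n$ is the volume of the unit ball. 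This inequality is valid even when $1-\varepsilon/4<1$, because the shell is always well defined. By the mean value theorem the bracket is at most $2n(\varepsilon/4)(3/2)^{n-1}$, using $\varepsilon\le 2$. Therefore
\[
M\le 2n(3/2)^{n-1}(4/\varepsilon)^{n-1},
\]
which gives $C_n=2n\,6^{n-1}$. An alternative is to compare surface measure of spherical caps, but the shell argument avoids cap-area estimates.

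\textbf{Step 3: from covering numbers to the diameter statement.} Given $N\ge1$, choose $\varepsilon=\min\{2,(C_n/N)^{1/(n-1)}\}$.

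In the case $\varepsilon<2$, Step 2 yields at most $C_n\varepsilon^{-(n-1)}=N$ balls of radius $\varepsilon$. Intersecting each ball with the sphere gives at most $N$ sets covering $S^{n-1}$, each of diameter at most $2\varepsilon=2C_n^{1/(n-1)}N^{-1/(n-1)}$.

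In the case $\varepsilon=2$, we have $N\ge C_n/2^{n-1}$, hence $N^{-1/(n-1)}\ge 2^{-1}C_n^{-1/(n-1)}$. The single set $S^{n-1}$, of diameter $2$, then already satisfies the bound provided $c_n\ge 4C_n^{1/(n-1)}$.

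In both cases the choice $c_n=4C_n^{1/(n-1)}$ works. Padding with empty sets, or repeating sets, handles the requirement "at most $N$" trivially.
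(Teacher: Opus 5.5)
Your proof is correct, but it takes a different route from the paper's and runs in the opposite logical direction.

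\textbf{The paper's route.} It proves the diameter statement first and constructively. It covers $S^{n-1}$ by the $2n$ coordinate patches on which some signed coordinate is at least $1/\sqrt n$, and writes each patch as a Lipschitz graph over a subset of $[-1,1]^{n-1}$. It then subdivides that cube into $q^{n-1}$ cells with $q=\lfloor (N/(2n))^{1/(n-1)}\rfloor$ and takes the cells' images as the at most $N$ sets. The covering-number bound is read off afterwards by choosing $N\asymp\varepsilon^{-(n-1)}$.

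\textbf{Your route.} You first bound the covering number using a maximal $\varepsilon$-separated set and a volume comparison in the shell $1-\varepsilon/4\le\norm{y}_2\le1+\varepsilon/4$. Comparing with a shell of thickness of order $\varepsilon$, rather than a ball, is exactly what yields the exponent $n-1$. You then invert in $N$.

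\textbf{What each buys.} Your argument is shorter and needs no charts or graph Lipschitz constants. It also gives explicit constants, $C_n=2n\,6^{n-1}$ and $c_n=4C_n^{1/(n-1)}=24(2n)^{1/(n-1)}$, which stay bounded as $n$ grows. In the chart construction, by contrast, the cube-cell diameters and the graph Lipschitz constant introduce $n$-dependent factors. In exchange, the paper's route produces an explicit family of cells directly in the ``at most $N$ sets'' form consumed by the pigeonhole step of Lemma~\ref{lem:merge}, without appealing to a non-constructive maximal packing. Since $n$ is fixed throughout the paper, either argument suffices.

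\textbf{One slip in Step 3.} The condition $\varepsilon=\min\{2,(C_n/N)^{1/(n-1)}\}=2$ means $N\le C_n/2^{n-1}$, not $N\ge C_n/2^{n-1}$. The correct inequality gives $N^{-1/(n-1)}\ge 2C_n^{-1/(n-1)}$. This is stronger than the bound $N^{-1/(n-1)}\ge 2^{-1}C_n^{-1/(n-1)}$ you use, so your choice of $c_n$ still works unchanged.

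Separately, your remark about ``$1-\varepsilon/4<1$'' is vacuous. What actually matters is that $\varepsilon\le2$ gives $1-\varepsilon/4\ge1/2>0$, so the shell-volume formula applies.
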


\begin{proof}
Split $S^{n-1}$ into the $2n$ coordinate patches on which one prescribed signed coordinate has magnitude at least $1/\sqrt n$. Up to a coordinate permutation, each patch is the graph
$z\mapsto(z,\pm\sqrt{1-\norm{z}_2^2})$ over a subset of $[-1,1]^{n-1}$. On that subset the final coordinate is bounded away from zero, so the graph map has a Lipschitz constant depending only on $n$.

For $N\ge2n$, set $q=\lfloor(N/(2n))^{1/(n-1)}\rfloor$ and divide $[-1,1]^{n-1}$ into $q^{n-1}$ congruent cells. Their intersections with the coordinate patches give at most $2nq^{n-1}\le N$ sets. Their diameters are at most $C_n'/q\le c_nN^{-1/(n-1)}$, where we used $q\ge\tfrac12(N/(2n))^{1/(n-1)}$. Enlarging $c_n$ covers the finitely many cases $N<2n$ by the sphere itself. The covering-number bound follows by choosing $N$ on the order of $\varepsilon^{-(n-1)}$ and placing a Euclidean ball at one point of each nonempty set.
\end{proof}

\begin{lemma}[Spherical cluster merging]\label{lem:merge}
Fix $n\ge2$ and a bounded level $\bar\lambda>\underline R$. There is a constant $C=C(n,L,D_X,\kappa,\bar\lambda-\underline R)$ such that, whenever $F_m(W,\theta)\le\lambda\le\bar\lambda$, every active row has norm one, and $1\le l<m$,
\begin{equation}
\delta_W^{\mathcal L}(m-l,0;m)
\le
\lambda+C\left(\frac{l+1}{m}\right)^{1/(n-1)}.
\label{eq:merge-bound}
\end{equation}
\end{lemma}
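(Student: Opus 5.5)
We need to produce a competitor $(W,\gamma)$ with $\norm{\gamma}_0\le m-l$, where $W$ is the unchanged first layer (recall $r=0$), and show $F_m(W,\gamma)\le\lambda+C((l+1)/m)^{1/(n-1)}$. The plan is to merge output coefficients of nearby active directions onto one representative per cluster. Merging loses nothing in the penalty and costs only a Lipschitz logit perturbation, controlled exactly as in Lemma~\ref{lem:bridge}.

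\textbf{Step 1 (pigeonhole cluster).} Let $A=\{i:\theta_i\ne0\}$, so every $w_i$ with $i\in A$ lies on $S^{n-1}$. If $\abs{A}\le m-l$, take $\gamma=\theta$ and there is nothing to prove. Otherwise apply Lemma~\ref{lem:sphere-cover} with $N=\lfloor m/(l+1)\rfloor\ge1$. This covers $S^{n-1}$ by at most $N$ sets of diameter at most $c_nN^{-1/(n-1)}\le c_n2^{1/(n-1)}((l+1)/m)^{1/(n-1)}$, where we used $N\ge m/(2(l+1))$. Assign each active index to one covering set. The covering partitions $A$ into at most $N$ classes $A_1,\dots,A_N$ with $\sum_k\abs{A_k}=\abs{A}$. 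The number of coordinates freed by keeping one representative per class is $\abs{A}-\#\{k:A_k\neq\emptyset\}\ge\abs{A}-N$. We need this to be at least $\abs{A}-(m-l)$, that is, $N\le m-l$. This holds because $N\le m/(l+1)\le m-l$ whenever $1\le l<m$; indeed $(m-l)(l+1)-m=l(m-l-1)\ge0$. So we merge within each class all the way down to one representative.

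\textbf{Step 2 (merged competitor and penalty).} For each nonempty class $A_k$, choose a representative $i_k$. Set $\gamma_{i_k}=\sum_{i\in A_k}\theta_i$ and $\gamma_i=0$ for the remaining indices. Then $\norm{\gamma}_0\le N\le m-l$ and $\norm{\gamma}_1\le\norm{\theta}_1$ by the triangle inequality. The first layer stays $W$, so $(W,\gamma)$ is feasible for $\delta^{\mathcal L}_W(m-l,0;m)$.

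\textbf{Step 3 (risk perturbation).} The logit difference is
\[
\Phi_m(X;W,\gamma)-\Phi_m(X;W,\theta)=\sum_k\sum_{i\in A_k}\theta_i\bigl(\relu(w_{i_k}^\top X)-\relu(w_i^\top X)\bigr).
\]
By \eqref{eq:relu-lipschitz}, its expected absolute value is at most $D_X\norm{\theta}_1\max_k\operatorname{diam}$, where the maximum is over the diameters of the covering sets. Lemma~\ref{lem:attainment} bounds $\norm{\theta}_1\le(\bar\lambda-\underline R)/\kappa$. Using the $L$-Lipschitz property of $\mathcal L$ together with the penalty comparison gives
\[
F_m(W,\gamma)\le\lambda+\frac{LD_X(\bar\lambda-\underline R)}{\kappa}\,c_n2^{1/(n-1)}\Bigl(\frac{l+1}{m}\Bigr)^{1/(n-1)}.
\]
This is \eqref{eq:merge-bound} with the displayed constant as $C$.

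The only delicate point is the counting in Step 1. We need the number of covering sets to be at most $m-l$ while their diameter still scales like $((l+1)/m)^{1/(n-1)}$. The choice $N=\lfloor m/(l+1)\rfloor$ achieves both, and the elementary inequality $m/(l+1)\le m-l$ closes the argument. Everything else follows directly from the earlier lemmas.
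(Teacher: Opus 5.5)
Your proof is correct, but it takes a different route from the paper.

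\textbf{The paper's route.} The paper works with the actual support size $s=\norm{\theta}_0$ and the required reduction $k=s-(m-l)$. It covers the sphere by $N=\lfloor s/(k+1)\rfloor$ sets and uses pigeonhole to find a \emph{single} cluster $Q$ of at least $k+1$ active rows. Only that cluster is merged onto one representative. Its diameter is controlled by $((k+1)/s)^{1/(n-1)}$, and the inequality $(k+1)/s\le(l+1)/m$ converts this into the stated rate.

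\textbf{Your route.} You quantize \emph{all} active directions at once with $N=\lfloor m/(l+1)\rfloor$ cells and collapse every nonempty cell to one atom. The paper's ratio inequality is replaced by $N\le m/(l+1)\le m-l$, i.e.\ $l(m-l-1)\ge0$. Both arguments end with the same constant $C=2^{1/(n-1)}c_nLD_X(\bar\lambda-\underline R)/\kappa$.

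\textbf{What each buys.} The paper's version is more local and adaptive. It changes only the coefficients indexed by $Q$. Its diameter scale $(k+1)/s$ can be much smaller than $(l+1)/m$ when $s$ is close to $m-l$ (e.g.\ $k=1$).

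Your version is the standard covering-number quantization. It needs no $s,k$ bookkeeping and no pigeonhole contradiction. It also never uses the preliminary case split $\abs{A}\le m-l$. Applied to any $\theta$, it produces a competitor with at most $\lfloor m/(l+1)\rfloor$ active coefficients. So it gives the same bound for $\delta_W^{\mathcal L}(\lfloor m/(l+1)\rfloor,0;m)$, a much smaller budget than $m-l$.

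As an aside, your inequality $(m-l)(l+1)\ge m$ would also let you take $N=m-l$ directly. The covering diameter is then $c_n(m-l)^{-1/(n-1)}\le c_n((l+1)/m)^{1/(n-1)}$, which removes both the floor and the factor $2^{1/(n-1)}$.
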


\begin{proof}
Let $s=\norm{\theta}_0$. If $s\le m-l$, the original point is already feasible for the compression problem and there is nothing to prove. Otherwise set
\[
k=s-(m-l),
\qquad 1\le k\le l.
\]
Put $N=\lfloor s/(k+1)\rfloor\ge1$ and use the cover from Lemma~\ref{lem:sphere-cover}. Assign each active row to one covering set that contains it. If every set received at most $k$ rows, then
$s\le kN\le ks/(k+1)<s$, a contradiction. Hence some set contains a cluster $Q$ of at least $k+1$ active rows. Since $\lfloor x\rfloor\ge x/2$ for $x\ge1$, its diameter is at most
\[
2^{1/(n-1)}c_n
\left(\frac{k+1}{s}\right)^{1/(n-1)}.
\]
Notice also that
\[
\frac{k+1}{s}
=\frac{s-m+l+1}{s}
\le\frac{l+1}{m},
\]
where the last inequality uses $s\le m$ and $l<m$.

Choose $i_0\in Q$, set
\[
\widetilde\theta_{i_0}=\sum_{i\in Q}\theta_i,
\qquad
\widetilde\theta_i=0\quad(i\in Q\setminus\{i_0\}),
\]
and leave the other coefficients unchanged. Because all coefficients indexed by $Q$ were active, this removes at least $k$ active coefficients and leaves at most $s-k=m-l$ of them. Moreover,
\[
\norm{\widetilde\theta}_1\le\norm{\theta}_1.
\]
Moreover,
\begin{align*}
\E\abs{\Phi_m(X;W,\theta)-\Phi_m(X;W,\widetilde\theta)}
&\le
\sum_{i\in Q\setminus\{i_0\}}\abs{\theta_i}
\E\abs{\relu(w_i^\top X)-\relu(w_{i_0}^\top X)}\\
&\le 2^{1/(n-1)}c_n
\left(\frac{k+1}{s}\right)^{1/(n-1)}D_X\norm{\theta}_1.
\end{align*}
By Lemma~\ref{lem:attainment}, $\norm{\theta}_1\le(\bar\lambda-\underline R)/\kappa$. The Lipschitz property of the loss now gives \eqref{eq:merge-bound}. No perturbation of $W$ is used, which explains the radius $r=0$.
\end{proof}

\begin{proposition}[Explicit ambient-cube certificate]\label{prop:cube-merge}
Without sphericalizing the rows, the same signed merge construction gives
\begin{equation}
\delta_W^{\mathcal L}(m-l,0;m)-F_m(W,\theta)
\le
4\sqrt n\,LD_X\norm{\theta}_1
\left(\frac{l+1}{m}\right)^{1/n}
\label{eq:cube-certificate}
\end{equation}
whenever $1\le l<m$; if the endpoint already has at most $m-l$ active coefficients, the left-hand side is nonpositive.
\end{proposition}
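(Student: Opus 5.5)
The plan is to repeat the pigeonhole-and-merge argument of Lemma~\ref{lem:merge} with the sphere cover replaced by a cube cover of the ambient ball. Let $s=\norm{\theta}_0$. If $s\le m-l$, the point $(W,\theta)$ is itself feasible for \eqref{eq:delta} with $r=0$, so the left-hand side of \eqref{eq:cube-certificate} is nonpositive. Otherwise set $k=s-(m-l)\in[1,l]$ and $N=\lfloor s/(k+1)\rfloor\ge1$. Every row satisfies $w_i\in B_2^n\subset[-1,1]^n$. Put $q=\lfloor N^{1/n}\rfloor\ge1$ and partition $[-1,1]^n$ into $q^n\le N$ congruent half-open cubes, each of side $2/q$ and Euclidean diameter $2\sqrt n/q$.

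Assign the $s$ active rows to these cells. If every cell held at most $k$ active rows, we would have $s\le kq^n\le kN\le ks/(k+1)<s$, a contradiction. So some cell contains a cluster $Q$ of at least $k+1$ active rows, and its diameter is at most $2\sqrt n/q$. Since $\lfloor x\rfloor\ge x/2$ for $x\ge1$, we get $q\ge \tfrac12N^{1/n}$. Applying the same inequality to $N$ gives $N\ge s/(2(k+1))$. Hence
\[
\frac{2\sqrt n}{q}\le 4\sqrt n\,N^{-1/n}\le 4\sqrt n\,2^{1/n}\Bigl(\frac{k+1}{s}\Bigr)^{1/n}.
\]
Together with $(k+1)/s\le(l+1)/m$, proved exactly as in Lemma~\ref{lem:merge}, this yields $4\sqrt n\,2^{1/n}((l+1)/m)^{1/n}$.

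The stated constant is $4\sqrt n$, so the stray factor $2^{1/n}$ has to be removed. This constant bookkeeping is the main obstacle. First, I would bound the diameter by $2\sqrt n/q$ with $q\ge N^{1/n}/2$ used only once, avoiding the second floor loss. The identity $N=\lfloor s/(k+1)\rfloor$ also gives $N+1>s/(k+1)$, and hence $q+1>N^{1/n}\ge\ldots$. Alternatively, I would choose $q=\lfloor (s/(k+1))^{1/n}\rfloor$ directly and use the strict pigeonhole $kq^n<s$, which holds because $q^n\le s/(k+1)$. Then $q\ge\tfrac12 (s/(k+1))^{1/n}$, since that quantity is at least $1$, and the diameter is at most $4\sqrt n\,((k+1)/s)^{1/n}$ with no extra factor. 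I would use this second version.

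Next comes the merge. Pick $i_0\in Q$, set $\widetilde\theta_{i_0}=\sum_{i\in Q}\theta_i$, set $\widetilde\theta_i=0$ for the other indices of $Q$, and leave all other coefficients unchanged. This leaves at most $s-k=m-l$ active coefficients, keeps $\norm{\widetilde\theta}_1\le\norm{\theta}_1$, and does not change $W$, so $r=0$. The logit changes by $\sum_{i\in Q\setminus\{i_0\}}\theta_i(\relu(w_{i_0}^\top X)-\relu(w_i^\top X))$. By \eqref{eq:relu-lipschitz}, its expected absolute value is at most $D_X\norm{\theta}_1\cdot 4\sqrt n((l+1)/m)^{1/n}$. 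The $L$-Lipschitz property of $\mathcal L$ then bounds the change in risk, and the penalty does not increase. Therefore
\[
\delta_W^{\mathcal L}(m-l,0;m)\le F_m(W,\widetilde\theta)\le F_m(W,\theta)+4\sqrt n\,LD_X\norm{\theta}_1\bigl(\tfrac{l+1}{m}\bigr)^{1/n},
\]
which is \eqref{eq:cube-certificate}. No sphericalization and no level bound $\bar\lambda$ are needed, because $\norm{\theta}_1$ appears explicitly on the right-hand side.
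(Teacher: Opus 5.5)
Your proposal is correct and, in its final form, is essentially the paper's proof: the paper also takes $p=\lfloor (s/(k+1))^{1/n}\rfloor$ cells per axis and uses pigeonhole to obtain a cluster of $k+1$ active rows of diameter at most $4\sqrt n\,((k+1)/s)^{1/n}$. It then merges the signed coefficients with $r=0$ and concludes with $(k+1)/s\le(l+1)/m$ and the ReLU and loss Lipschitz bounds. You can delete the first variant with $N=\lfloor s/(k+1)\rfloor$ and its stray factor $2^{1/n}$, since choosing $q$ directly avoids the second floor loss.
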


\begin{proof}
When compression is needed, let $s=\norm{\theta}_0$ and $k=s-(m-l)$. Partition $[-1,1]^n$ into $p^n$ congruent cells with
$p=\lfloor(s/(k+1))^{1/n}\rfloor$. The active-row pigeonhole argument gives a cluster of at least $k+1$ rows with diameter at most
$4\sqrt n((k+1)/s)^{1/n}$. Merge its signed coefficients as in Lemma~\ref{lem:merge}; then use
$(k+1)/s\le(l+1)/m$, the ReLU Lipschitz estimate, and loss Lipschitzness. This proves \eqref{eq:cube-certificate}.
\end{proof}

\begin{lemma}[Two-ray compression]\label{lem:two-ray}
If $n=1$, every point of $\mathcal W_m$ is connected by a nonincreasing-objective path to an equivalent representation with at most two active neurons.
\end{lemma}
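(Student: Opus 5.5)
In dimension $n=1$ the closed unit ball is $[-1,1]$, and the active part of each neuron is $\theta_i\relu(w_ix)$. The plan is to first apply Lemma~\ref{lem:sphericalize}. It gives a path with constant predictor and nonincreasing objective to a point whose active rows all satisfy $|w_i|=1$, that is $w_i\in\{+1,-1\}$. After this step every active neuron is one of the two ray atoms $\relu(x)$ or $\relu(-x)$. The resulting predictor is
\[
\Phi_m(x)=a_+\relu(x)+a_-\relu(-x),\qquad a_\pm=\sum_{i:\,w_i=\pm1}\theta_i .
\]

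Next I would merge the coefficients within each of the two groups. Take the group $w_i=+1$ and fix a representative index $i_+$. Then linearly interpolate the coefficient vector of that group to the vector with $a_+$ at $i_+$ and zero elsewhere, keeping the first layer fixed. Since all rows in the group are identical, the predictor $\sum_{i:w_i=+1}\theta_i(t)\relu(x)$ stays constant along the segment, because the sum of the coefficients is preserved. So the risk is constant. The penalty $\norm{\theta(t)}_1$ is convex in $t$, and its final value satisfies $|a_+|\le\sum|\theta_i|$. Hence it is bounded by its initial value; I would also like it to be nonincreasing along the segment. The group $w_i=-1$ is treated the same way.

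The only point needing care is monotonicity rather than just a bound at the endpoints. A convex function of $t$ whose value at $t=1$ is at most its value at $t=0$ need not be nonincreasing in general. Here, however, each coordinate is an affine function of $t$, so $\norm{\theta(t)}_1$ is piecewise linear and convex. I would argue directly instead. Perform the merge as a sequence of pairwise transfers: move the full coefficient of a row $j$ into $i_+$ along $(\theta_{i_+}+t\theta_j,(1-t)\theta_j)$. Along such a transfer $|\theta_{i_+}+t\theta_j|+(1-t)|\theta_j|$ is nonincreasing. Its derivative is at most $|\theta_j|-|\theta_j|=0$ wherever it exists, by the triangle inequality, since $\frac{d}{dt}|\theta_{i_+}+t\theta_j|\le|\theta_j|$. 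Each pairwise transfer therefore keeps the predictor fixed and the objective nonincreasing.

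After all transfers, at most two coefficients, those at $i_+$ and $i_-$, are nonzero, and the predictor equals the original one. Concatenating the sphericalization path with these finitely many transfers gives the desired nonincreasing-objective path to an equivalent representation with at most two active neurons. Here ``equivalent'' is understood as having the same predictor; the objective can only decrease.
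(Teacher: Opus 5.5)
Your proof is correct and follows the paper's route: sphericalize with Lemma~\ref{lem:sphericalize}, then merge the signed coefficients at each of the two directions $\pm1$ onto one representative, keeping the predictor fixed and controlling the $\ell_1$ penalty by the triangle inequality. Your pairwise-transfer argument spells out the monotonicity that the paper leaves implicit. The one-shot linear interpolation is in fact also monotone: the representative coordinate's derivative has absolute value at most $\bigl|\sum_{i\ne i_+}\theta_i\bigr|\le\sum_{i\ne i_+}\abs{\theta_i}$, with both sums over the remaining rows of the group, so the pairwise decomposition is safe but not needed.
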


\begin{proof}
After Lemma~\ref{lem:sphericalize}, active rows belong to $S^0=\{-1,+1\}$. At each direction, replace all coefficients by their signed sum on one representative and set the others to zero. This preserves the predictor and does not increase the $\ell_1$ norm by the triangle inequality. The coefficient transfer is continuous between identical rows, so the objective is nonincreasing throughout. At most two active rows remain.
\end{proof}

\begin{theorem}[Uniform decay above the limiting value]\label{thm:fixed-level}
Let $\lambda>e_\infty$. If $n\ge2$, there are constants $M_\lambda<\infty$ and $C_\lambda<\infty$ such that, for every $m\ge M_\lambda$ and every pair
\[
\vartheta_m^A,\vartheta_m^B\in\Omega_m(\lambda),
\]
there is a continuous path $\gamma_m$ joining them with
\begin{equation}
\max_{t\in[0,1]}F_m(\gamma_m(t))
\le\lambda+C_\lambda m^{-1/(n-1)}.
\label{eq:fixed-rate}
\end{equation}
Equivalently,
\[
\SubBar_m(\lambda)=O(m^{-1/(n-1)}).
\]
The constant is uniform over the endpoints.
If $n=1$, then
\[
\SubBar_m(\lambda)=0
\qquad\text{for every }m\ge4.
\]
\end{theorem}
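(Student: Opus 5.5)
The plan is to specialize Theorem~\ref{thm:finite} with radius $r=0$, after first running Lemma~\ref{lem:sphericalize} on each endpoint. For $n\ge2$, fix $\lambda>e_\infty$ and choose an integer $l_\lambda$ with $e(l_\lambda)\le\lambda$. Such an $l_\lambda$ exists because $e(l)$ decreases to $e_\infty<\lambda$. Set $M_\lambda=l_\lambda+1$, so that $1\le l_\lambda<m$ for every $m\ge M_\lambda$.

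Given endpoints $\vartheta^A_m,\vartheta^B_m\in\Omega_m(\lambda)$, first apply Lemma~\ref{lem:sphericalize} to each of them. This connects them, through a nonincreasing-objective path that stays in $\Omega_m(\lambda)$, to points whose active rows lie on $S^{n-1}$. Apply Theorem~\ref{thm:finite} to these sphericalized endpoints with $l=l_\lambda$ and $r=0$. The perturbation term in \eqref{eq:finite-bound} then vanishes, so the maximum along the path is at most $\Lambda_m(l_\lambda,0;\lambda)$.

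It remains to bound $\Lambda_m(l_\lambda,0;\lambda)$. The entries $\lambda$ and $e(l_\lambda)$ are at most $\lambda$ by the choice of $l_\lambda$. The two compression values are controlled by Lemma~\ref{lem:merge} with $\bar\lambda=\lambda$. When $\lambda\le\underline R$, the inequality $e_\infty\ge\underline R$ makes the hypothesis $\lambda>e_\infty$ vacuous. The case $\lambda>\underline R$ is the one that matters. Lemma~\ref{lem:merge} gives
\[
\delta^{\mathcal L}_{W}(m-l_\lambda,0;m)\le\lambda+C\Bigl(\frac{l_\lambda+1}{m}\Bigr)^{1/(n-1)}.
\]
The constant $C$ depends only on $n,L,D_X,\kappa,\lambda-\underline R$, and not on the endpoints. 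Setting $C_\lambda=C(l_\lambda+1)^{1/(n-1)}$ yields \eqref{eq:fixed-rate}. Concatenating the two sphericalization paths with the Theorem~\ref{thm:finite} path gives $\gamma_m$. The constant is uniform over endpoints because $l_\lambda$ depends only on $\lambda$. The main subtlety I expect is in how Theorem~\ref{thm:finite} is applied: it needs no constraint on $\widehat W$ beyond $r=0$, so the compression minimizer shares the first layer and the merge certificate applies verbatim.

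For $n=1$, I would replace cluster merging with Lemma~\ref{lem:two-ray}. Each endpoint is joined by a nonincreasing path to a representation with at most two active neurons, with signed directions $\pm1$. The dictionary of predictors is then $a\relu(x)+b\relu(-x)$, and $e(l)=e(2)$ for all $l\ge2$, so $e_\infty=e(2)\le\lambda$.

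Next I would use Theorem~\ref{thm:finite} with $r=0$ and $l=2$. This needs $\delta^{\mathcal L}_W(m-2,0;m)\le\lambda$, which holds as soon as $m-2\ge2$, that is, $m\ge4$. The two-ray compression makes the endpoint itself feasible for that compression problem. Then $\Lambda_m(2,0;\lambda)=\lambda$, the perturbation term is zero, and so $\SubBar_m(\lambda)=0$.

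The one point to check is the condition $l<m$ needed for the buffer permutation argument in Theorem~\ref{thm:finite}. With $l=2$ and $m\ge4$ it holds, and at least two inactive slots exist. This explains the threshold $m\ge4$.
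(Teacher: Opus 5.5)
Your proposal is correct and follows the paper's proof essentially step for step. For $n\ge2$ you sphericalize both endpoints with Lemma~\ref{lem:sphericalize}, fix a reference width $l_\lambda$, bound both compression values with Lemma~\ref{lem:merge}, and apply Theorem~\ref{thm:finite} with $r=0$; for $n=1$ you use Lemma~\ref{lem:two-ray} with $l=2$. One small correction: the threshold $m\ge4$ comes from the feasibility condition $2\le m-2$ on the compression values, as you note earlier, not from the buffer condition $l<m$, which would only need $m\ge3$.
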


\begin{proof}
Because $e(l)\downarrow e_\infty$, choose a fixed $l_\lambda$ with $e(l_\lambda)<\lambda$. Concatenate each endpoint with its nonincreasing sphericalization path. For $n\ge2$, apply Lemma~\ref{lem:merge} to the two sphericalized endpoints with $l=l_\lambda$ and $\bar\lambda=\lambda$. Then
\[
\delta_{W_A^\circ}^{\mathcal L}(m-l_\lambda,0;m),
\delta_{W_B^\circ}^{\mathcal L}(m-l_\lambda,0;m)
\le\lambda+C_\lambda m^{-1/(n-1)}.
\]
Use Theorem~\ref{thm:finite} from the sphericalized endpoints with $r=0$. The perturbation term vanishes, while $e(l_\lambda)<\lambda$, and reversing the endpoint sphericalization paths proves \eqref{eq:fixed-rate}. If $n=1$, Lemma~\ref{lem:two-ray} implies $e(2)=e_\infty$: every finite-width network has a no-worse representation with at most two active neurons, while width monotonicity gives the reverse inequality. Continue each sphericalization path with the nonincreasing two-ray compression path and take $l=2$. For every $m\ge4$, each resulting endpoint has at most $2\le m-2$ active coefficients, so both compression values in Theorem~\ref{thm:finite} are at most $\lambda$; moreover $e(2)=e_\infty<\lambda$. The theorem with $r=0$, together with the nonincreasing endpoint paths, gives a path entirely inside $\Omega_m(\lambda)$.
\end{proof}

The fixed-level theorem does not need a quantitative estimate for $e(l)-e_\infty$. Before treating moving levels we derive such an estimate directly for the present regularized model.

\begin{lemma}[Objective-value approximation by sampling]\label{lem:maurey}
There is a constant $A<\infty$, depending only on the standing problem data, such that
\begin{equation}
0\le e(l)-e_\infty\le A l^{-1/2},
\qquad l\ge1.
\label{eq:universal-approx-rate}
\end{equation}
\end{lemma}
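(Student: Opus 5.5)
The lower bound $e(l)\ge e_\infty$ is immediate, because $e(l)$ is nonincreasing and converges to $e_\infty$. For the upper bound I would use Maurey's empirical method on a near-minimizer at a large width $M$. Given $\varepsilon>0$, pick $M\ge l$ and $(W,\theta)\in\mathcal W_M$ with $F_M(W,\theta)\le e_\infty+\varepsilon$. Lemma~\ref{lem:attainment} gives $\norm{\theta}_1\le(e(1)+1-\underline R)/\kappa=:T$ once $\varepsilon\le1$; this bound is uniform in $M$. If $\theta=0$ there is nothing to prove, so assume $S:=\norm{\theta}_1>0$. Write the predictor as an expectation of a random atom,
\[
\Phi_M(x;W,\theta)=\E_J\bigl[S\,\mathrm{sgn}(\theta_J)\relu(w_J^\top x)\bigr],\qquad \Pr(J=i)=\abs{\theta_i}/S.
\]

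Next I draw $J_1,\dots,J_l$ i.i.d.\ and form the width-$l$ network $U=(w_{J_1},\dots,w_{J_l})$ with $\xi_k=S\,\mathrm{sgn}(\theta_{J_k})/l$. Its first-layer rows stay in $B_2^n$ and $\norm{\xi}_1=S$, so the penalty is unchanged. For each fixed $x$, the sampled predictor $\hat\Phi(x)$ is an average of $l$ i.i.d.\ terms with mean $\Phi_M(x)$. Each term has second moment at most $S^2\relu(w_J^\top x)^2\le S^2\norm{x}_2^2$, using $\norm{w_J}\le1$. Hence
\[
\E_J\abs{\hat\Phi(x)-\Phi_M(x)}\le\sqrt{\E_J\abs{\hat\Phi(x)-\Phi_M(x)}^2}\le \frac{S\norm{x}_2}{\sqrt l}.
\]

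By the $L$-Lipschitz property of $\mathcal L$ and Fubini,
\[
\E_J\bigl[R_l(U,\xi)\bigr]\le R_M(W,\theta)+\frac{LS\,\E\norm{X}_2}{\sqrt l}\le R_M(W,\theta)+\frac{LT\sqrt{\operatorname{tr}\SigX}}{\sqrt l}.
\]
Since the penalty is deterministic, some realization satisfies $e(l)\le F_l(U,\xi)\le e_\infty+\varepsilon+LT\sqrt{\operatorname{tr}\SigX}\,l^{-1/2}$. Letting $\varepsilon\to0$ gives the claim with $A=LT\sqrt{\operatorname{tr}\SigX}$, which is finite because the second moment of $X$ is finite.

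The main obstacle is measurability and integrability in exchanging $\E_J$ and $\E_{X,Y}$. Since $J$ ranges over a finite set this is harmless, but one must check that $R_l$ is finite for each realization. That follows from $\E\abs{\mathcal L(Y,0)}<\infty$ and the Lipschitz bound $\abs{\mathcal L(Y,u)}\le\abs{\mathcal L(Y,0)}+L\abs u$ with $\abs{u}\le S\norm{X}_2$. The key point is that $T$ does not depend on $M$, which holds because every near-minimizer lies in a sublevel with level at most $e(1)+1$.
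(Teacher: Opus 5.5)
Your proposal is correct and follows essentially the same route as the paper. It takes a near-minimizer with $\ell_1$ mass bounded uniformly in width via Lemma~\ref{lem:attainment}, applies Maurey sampling with the variance bound from $\relu(w^\top x)^2\le\norm{x}_2^2$ and the Lipschitz loss, picks a good realization, and lets $\varepsilon\downarrow0$. The only cosmetic differences are that you anchor $T$ at $e(1)$ rather than $F_1(0,0)$, and that you apply Cauchy--Schwarz pointwise in $x$ before integrating; this gives the same form of constant, since $\E\norm{X}_2\le\sqrt{\operatorname{tr}\SigX}=\sqrt{\E\norm{X}_2^2}$.
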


\begin{proof}
Let $B_0=F_1(0,0)$, which is finite by Assumption~\ref{ass:standing}, and set $T=(B_0-\underline R+1)/\kappa$. For every $\varepsilon\in(0,1)$ choose a width $q=q(\varepsilon)$ and $(W,\theta)\in\mathcal W_q$ with
$F_q(W,\theta)\le e_\infty+\varepsilon\le B_0+\varepsilon$. Lemma~\ref{lem:attainment} gives $\norm{\theta}_1\le T$. If $\norm{\theta}_1=0$, then both predictors are zero and the width-$l$ zero network satisfies
\[
e(l)\le F_l(0,0)=F_q(W,0)\le e_\infty+\varepsilon;
\]
this is already no larger than the common bound obtained below. Otherwise write $a=\norm{\theta}_1$ and sample independently indices $I_1,\ldots,I_l$ with probabilities $\abs{\theta_i}/a$. The random $l$-neuron predictor
\[
\widehat f_l(x)=\frac{a}{l}\sum_{j=1}^l
\operatorname{sgn}(\theta_{I_j})\relu(w_{I_j}^\top x)
\]
has output $\ell_1$ norm at most $a$ and satisfies $\E_I\widehat f_l(x)=\Phi_q(x;W,\theta)$. Independence and $\relu(w^\top x)^2\le\norm{x}_2^2$ give
\[
\E_I\E_X\abs{\widehat f_l(X)-\Phi_q(X;W,\theta)}^2
\le \frac{a^2}{l}\E\norm{X}_2^2.
\]
By Cauchy--Schwarz and the $L$-Lipschitz property of the loss, the expected objective of the sampled network is at most
\[
F_q(W,\theta)+\frac{La}{\sqrt l}\sqrt{\E\norm{X}_2^2}
\le e_\infty+\varepsilon+
\frac{LT}{\sqrt l}\sqrt{\E\norm{X}_2^2}.
\]
At least one realization obeys the same upper bound. Thus, in either case,
\[
e(l)\le e_\infty+\varepsilon+
\frac{LT}{\sqrt l}\sqrt{\E\norm{X}_2^2}.
\]
Since the construction applies to every $\varepsilon\in(0,1)$, letting $\varepsilon\downarrow0$ proves \eqref{eq:universal-approx-rate}.
\end{proof}

\begin{corollary}[Qualitative moving-level connectivity]\label{cor:qualitative}
Let $\lambda_m$ be any bounded sequence with $\lambda_m\ge e(m)$. Then $\SubBar_m(\lambda_m)\to0$ for $n\ge2$, while for $n=1$,
\[
\SubBar_m(\lambda_m)=0
\qquad\text{for every }m\ge4.
\]
\end{corollary}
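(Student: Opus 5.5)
The plan is to rerun the proof of Theorem~\ref{thm:fixed-level} with a width-dependent compression budget $l=l_m$ in place of the fixed $l_\lambda$. Such a fixed choice is not available here: $\lambda_m$ may approach $e_\infty$ from above, or equal it, so no single $l$ with $e(l)<\lambda_m$ need exist. The growing budget trades the term $e(l)-e_\infty$, which Lemma~\ref{lem:maurey} controls, against the cluster-merging term of Lemma~\ref{lem:merge}.

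\emph{Setup.} Put $\bar\lambda=\max\{\sup_m\lambda_m,\underline R+1\}$, which is finite by boundedness and satisfies $\bar\lambda>\underline R$ as Lemma~\ref{lem:merge} requires. If $\Omega_m(\lambda_m)$ is empty, then $\SubBar_m(\lambda_m)=0$ by convention. Otherwise take $\vartheta^A,\vartheta^B\in\Omega_m(\lambda_m)$ and first follow the nonincreasing sphericalization paths of Lemma~\ref{lem:sphericalize}. The resulting endpoints are still in $\Omega_m(\lambda_m)$ and have every active row on $S^{n-1}$.

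\emph{Case $n\ge2$.} Apply Theorem~\ref{thm:finite} to the sphericalized endpoints with $r=0$, so the perturbation term vanishes and the path maximum is at most $\Lambda_m(l,0;\lambda_m)$. The three competitors in \eqref{eq:Lambda} are bounded as follows.
\begin{itemize}
\item For the approximation value, $l\le m$ and monotonicity give $e_\infty\le e(m)\le\lambda_m$. Lemma~\ref{lem:maurey} then yields $e(l)\le e_\infty+Al^{-1/2}\le\lambda_m+Al^{-1/2}$.
\item For the compression values, Lemma~\ref{lem:merge} with this $\bar\lambda$ gives, for $1\le l<m$,
\[
\delta^{\mathcal L}_{W^\circ_{A}}(m-l,0;m),\ \delta^{\mathcal L}_{W^\circ_{B}}(m-l,0;m)\le\lambda_m+C\bigl((l+1)/m\bigr)^{1/(n-1)},
\]
with $C$ independent of $m$ and of the endpoints.
\end{itemize}
Together these give
\[
\SubBar_m(\lambda_m)\le\max\Bigl\{Al_m^{-1/2},\,C\bigl((l_m+1)/m\bigr)^{1/(n-1)}\Bigr\}.
\]
Choose $l_m=\lceil\sqrt m\,\rceil$, which satisfies $l_m<m$ for $m\ge3$. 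Then $l_m\to\infty$ and $l_m/m\to0$, so both terms tend to zero uniformly in the endpoints.

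\emph{Case $n=1$.} Argue exactly as in the fixed-level theorem, except that the strict inequality $e(l_\lambda)<\lambda$ is replaced by the non-strict bound $e(2)=e_\infty\le e(m)\le\lambda_m$, which is all Theorem~\ref{thm:finite} needs. Follow each sphericalization path with the two-ray compression of Lemma~\ref{lem:two-ray}. For $m\ge4$ each endpoint then has at most $2\le m-2$ active coefficients, so both $\delta$ values with $l=2$, $r=0$ are at most $\lambda_m$. Hence $\Lambda_m(2,0;\lambda_m)=\lambda_m$, and the concatenated path stays in $\Omega_m(\lambda_m)$.

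\emph{Main obstacle.} The only delicate point is that $\lambda_m$ may sit at or just above $e_\infty$, which rules out a fixed budget $l$. The quantitative rate in Lemma~\ref{lem:maurey} is what allows a diverging $l_m$. Making $l_m$ sublinear in $m$ keeps the merge bound vanishing, and the boundedness of $\lambda_m$ keeps the constant $C$ uniform.
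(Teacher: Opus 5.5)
Your proof is correct and follows the paper's route: sphericalize, then apply Lemma~\ref{lem:merge} and Theorem~\ref{thm:finite} with $r=0$ and a diverging sublinear budget $l_m\approx\sqrt m$, and for $n=1$ reuse the two-ray argument with reference width two. The only difference is that you invoke Lemma~\ref{lem:maurey} to bound $e(l_m)-e_\infty$, whereas the paper needs only $e(l_m)\to e_\infty$, which holds for any $l_m\to\infty$ because $e(l)\downarrow e_\infty$; so, contrary to your closing remark, the quantitative rate is not what makes a diverging budget work.
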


\begin{proof}
For $n\ge2$, take any integers $l_m\to\infty$ with $l_m/m\to0$, for example $l_m=\lfloor\sqrt m\rfloor$. Sphericalization, Lemma~\ref{lem:merge}, and Theorem~\ref{thm:finite} give
\[
\SubBar_m(\lambda_m)
\le (e(l_m)-e_\infty)
+C\left(\frac{l_m+1}{m}\right)^{1/(n-1)},
\]
where $C$ is uniform because $(\lambda_m)$ is bounded. Both terms tend to zero. If $n=1$, concatenate each endpoint with the nonincreasing paths from Lemmas~\ref{lem:sphericalize} and~\ref{lem:two-ray}. Every resulting representation has at most two active neurons, and the same lemmas imply $e(l)=e(2)=e_\infty$ for $l\ge2$. Choosing reference width two in Theorem~\ref{thm:finite} then gives zero thickening for every $m\ge4$; reversing the second endpoint path completes the connection.
\end{proof}

\begin{corollary}[Approximation rate implies barrier rate]\label{cor:transfer}
Suppose that for some $A>0$ and $s>0$,
\begin{equation}
0\le e(l)-e_\infty\le A l^{-s},
\qquad l\ge1.
\label{eq:approx-rate}
\end{equation}
Let $\lambda_m$ be any bounded sequence with $\lambda_m\ge e(m)$. Then
\begin{equation}
\SubBar_m(\lambda_m)
=O\!\left(m^{-s/((n-1)s+1)}\right),
\qquad n\ge2.
\label{eq:transfer-rate}
\end{equation}
For $n=1$ the thickening is zero for every $m\ge4$. In particular, these statements control the path thickening between arbitrary global minimizers when $\lambda_m=e(m)$.
\end{corollary}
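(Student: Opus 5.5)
The plan is to reuse the moving-level argument from Corollary~\ref{cor:qualitative}, but choose the reference width $l=l_m$ to balance its two error terms rather than just letting both vanish. Fix a bound $\bar\lambda\ge\sup_m\lambda_m$; we may take $\bar\lambda>\underline R$ after enlarging it. Take two endpoints in $\Omega_m(\lambda_m)$. First join each to its sphericalization by Lemma~\ref{lem:sphericalize}. This path is nonincreasing, so it stays below $\lambda_m$, and the sphericalized point still lies in $\Omega_m(\lambda_m)$ with every active row of norm one. Then apply Theorem~\ref{thm:finite} with $r=0$ and $l=l_m<m$; the perturbation term vanishes. Lemma~\ref{lem:merge}, whose constant $C$ is uniform because $\lambda_m\le\bar\lambda$, gives
\[
\Lambda_m(l_m,0;\lambda_m)\le\max\Bigl\{\lambda_m,\;e(l_m),\;\lambda_m+C\Bigl(\tfrac{l_m+1}{m}\Bigr)^{1/(n-1)}\Bigr\}.
\]
Reversing the second sphericalization path completes the connection.

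Next we compare this bound with the level $\lambda_m$. Since $\lambda_m\ge e(m)\ge e_\infty$, we have $e(l_m)-\lambda_m\le e(l_m)-e_\infty\le A l_m^{-s}$ by \eqref{eq:approx-rate}. Hence, uniformly over the endpoint pairs,
\[
\SubBar_m(\lambda_m)\le A l_m^{-s}+C\Bigl(\frac{l_m+1}{m}\Bigr)^{1/(n-1)}.
\]
The optimization step equates $l^{-s}$ with $(l/m)^{1/(n-1)}$. This gives $l^{s+1/(n-1)}=m^{1/(n-1)}$, that is, $l\asymp m^{1/((n-1)s+1)}$. So we set $l_m=\lceil m^{1/((n-1)s+1)}\rceil$. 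Both terms are then $O(m^{-s/((n-1)s+1)})$: the first is $\le A m^{-s/((n-1)s+1)}$ directly, and in the second $(l_m+1)/m\le 3m^{1/((n-1)s+1)-1}$. Raising that to the power $1/(n-1)$ gives exponent $-\frac{(n-1)s}{(n-1)s+1}\cdot\frac1{n-1}=-\frac{s}{(n-1)s+1}$. We need $l_m<m$, which holds for all large $m$ because $1/((n-1)s+1)<1$. The finitely many small $m$ only affect the implied constant, since the thickening is bounded there. To see this, take $l=1$ in the same argument: $\Lambda_m$ is at most $\max\{\bar\lambda,e(1)\}$ plus a bounded merge term.

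The case $n=1$ is identical to the $n=1$ part of Corollary~\ref{cor:qualitative}. Apply Lemma~\ref{lem:two-ray} after sphericalization; it gives $e(2)=e_\infty\le\lambda_m$, and Theorem~\ref{thm:finite} with $l=2$, $r=0$ yields zero thickening for $m\ge4$. The statement about global minimizers is the special case $\lambda_m=e(m)$, which is bounded because $e(m)$ is nonincreasing and bounded below by $\underline R$.

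The main obstacle is bookkeeping rather than a new idea. We must check that Lemma~\ref{lem:merge} applies at the varying level with a constant independent of $m$; this follows from the uniform bound $\bar\lambda$ in its hypothesis. We must also handle the hypothesis $1\le l_m<m$ and the degenerate case where $\bar\lambda\le\underline R$, which is fixed by enlarging $\bar\lambda$.
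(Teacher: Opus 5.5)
Your proposal is correct and follows the paper's proof essentially step for step. You sphericalize both endpoints, apply Lemma~\ref{lem:merge} and Theorem~\ref{thm:finite} with $r=0$ to get the uniform tradeoff $A l^{-s}+C((l+1)/m)^{1/(n-1)}$, balance it with $l_m\asymp m^{1/((n-1)s+1)}$ (a ceiling where the paper takes a floor), and defer $n=1$ to Corollary~\ref{cor:qualitative}. Your extra remark on small widths needs $m\ge2$ so that $l=1<m$; this does not matter for an asymptotic bound, and $m=1$ is in any case handled by taking $l=m$ in Theorem~\ref{thm:finite}.
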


\begin{proof}
For any $1\le l<m$, first sphericalize both endpoints by Lemma~\ref{lem:sphericalize}. Applying Lemma~\ref{lem:merge} and Theorem~\ref{thm:finite} with $r=0$ to the sphericalized endpoints, and adjoining the nonincreasing endpoint paths, gives uniformly over endpoint pairs in $\Omega_m(\lambda_m)$,
\begin{equation}
\SubBar_m(\lambda_m)
\le
A l^{-s}+C\left(\frac{l+1}{m}\right)^{1/(n-1)},
\label{eq:tradeoff}
\end{equation}
where $C$ is uniform because $(\lambda_m)$ is bounded. Indeed, $\lambda_m\ge e(m)\ge e_\infty$, so the excess of $e(l)$ over $\lambda_m$ is at most $A l^{-s}$. For all sufficiently large $m$, choose the integer
$l_m=\lfloor m^{1/((n-1)s+1)}\rfloor$, so $1\le l_m<m$. The two terms in \eqref{eq:tradeoff} are then both $O(m^{-s/((n-1)s+1)})$; the finitely many remaining widths are absorbed into the constant. This proves \eqref{eq:transfer-rate}. The one-dimensional assertion follows from Corollary~\ref{cor:qualitative}.
\end{proof}

\begin{corollary}[Near-optimal rate under the standing assumptions]\label{cor:near-optimal}
For every bounded sequence $\lambda_m\ge e(m)$,
\[
\SubBar_m(\lambda_m)=O(m^{-1/(n+1)}),
\qquad n\ge2,
\]
and the thickening is zero for every $m\ge4$ when $n=1$.
\end{corollary}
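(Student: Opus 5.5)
The corollary follows by specializing Corollary~\ref{cor:transfer} to the rate supplied by Lemma~\ref{lem:maurey}. That lemma shows that, under Assumption~\ref{ass:standing} alone, the hypothesis \eqref{eq:approx-rate} holds with exponent $s=\tfrac12$. The constant there is $A=LT\sqrt{\E\norm{X}_2^2}$, which is finite because the second moment of $X$ is finite. If $A$ happens to be zero, we replace it by any positive number so that the hypothesis $A>0$ of Corollary~\ref{cor:transfer} is met. No further assumption on the target or the distribution is needed.

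For $n\ge2$, we substitute $s=\tfrac12$ into \eqref{eq:transfer-rate}. The exponent becomes
\[
\frac{s}{(n-1)s+1}=\frac{1/2}{(n-1)/2+1}=\frac{1}{n+1}.
\]
This gives $\SubBar_m(\lambda_m)=O(m^{-1/(n+1)})$ for every bounded sequence $\lambda_m\ge e(m)$. Concretely, the trade-off \eqref{eq:tradeoff} is balanced by choosing $l_m=\lfloor m^{2/(n+1)}\rfloor$, which lies in $[1,m)$ for large $m$. Both terms $Al_m^{-1/2}$ and $C((l_m+1)/m)^{1/(n-1)}$ are then of order $m^{-1/(n+1)}$. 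Bounded-ness of $(\lambda_m)$ keeps the merging constant $C$ of Lemma~\ref{lem:merge} uniform, and the finitely many small widths are absorbed into the constant.

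For $n=1$, the conclusion is already contained in Corollary~\ref{cor:qualitative} (equivalently, the last clause of Corollary~\ref{cor:transfer}). That result uses the two-ray compression of Lemma~\ref{lem:two-ray} and gives zero thickening for all $m\ge4$, with no rate needed.

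There is no real obstacle here. The only point to check is that the constants in Lemma~\ref{lem:maurey} depend only on the standing data, and not on $m$ or $\lambda_m$, so that Corollary~\ref{cor:transfer} applies verbatim. The word ``near-optimal'' is explanatory: $l_m$ optimizes the bound \eqref{eq:tradeoff}, and we do not claim a matching lower bound.
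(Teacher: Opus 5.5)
Your proposal is correct and matches the paper's proof, which consists of the single step of inserting the rate $s=\tfrac12$ from Lemma~\ref{lem:maurey} into Corollary~\ref{cor:transfer}, with the $n=1$ clause inherited from Corollary~\ref{cor:qualitative}. Your exponent computation, the choice $l_m=\lfloor m^{2/(n+1)}\rfloor$, and the remark that one may take $A>0$ simply write out that substitution explicitly.
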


\begin{proof}
Insert the rate $s=1/2$ from Lemma~\ref{lem:maurey} into Corollary~\ref{cor:transfer}.
\end{proof}

\begin{remark}[Two width regimes]
There is no contradiction between Theorem~\ref{thm:fixed-level} and Corollary~\ref{cor:transfer}. A fixed level $\lambda>e_\infty$ permits a fixed $l_\lambda$ and yields the geometric rate $m^{-1/(n-1)}$ for $n\ge2$. Near the limiting optimum, $l$ must grow so that $e(l)$ follows the moving level; balancing approximation and geometric compression produces the slower rate in \eqref{eq:transfer-rate}. Lemma~\ref{lem:maurey} supplies one universal balance, while target-specific information may improve it.
\end{remark}

\section{Theory-Aligned Computational Diagnostic}\label{sec:computational}

\subsection{Design and evidentiary scope}

The numerical experiment instantiates exactly the bias-free model \eqref{eq:model}: every first-layer row is projected onto $B_2^n$, and the $\ell_1$ penalty acts only on the output coefficients. Its regression loss is
\[
\mathcal L_\delta(y,u)=
\begin{cases}\frac12r^2,&\abs{r}\le\delta,\\
\delta(\abs{r}-\frac12\delta),&\abs{r}>\delta,
\end{cases}
\qquad r=u-y,
\]
with $\delta=0.25$; hence it is globally Lipschitz in $u$ with $L=0.25$. The NumPy evaluator and PyTorch optimizer implement this same normalization. For each $n\in\{1,2,4\}$, a seeded finite distribution of 384 points in the unit ball is generated by a width-512 ridge-function teacher. The width grid is $m\in\{8,16,32,64,128\}$, with ten independent pool replicates and eight projected-proximal-Adam solutions per pool.

Within every replicate-width group we select three disjoint endpoint pairs at each of two levels. At a given width, the base level is the largest across replicates of the sixth-smallest objective among the eight trained models. The fixed level is the largest base level across widths within a dimension; the moving level is the width-specific base level plus the prespecified slack $0.02F(0)m^{-1/2}$. This gives 300 pair records per dimension and 900 pair records overall; pairing is disjoint within, but not across, the level groups. The pooled best optimized value $\widehat e(m)$ is only an upper proxy for $e(m)$; neither global optimizer convergence nor sampling of the entire sublevel is assumed. Further deterministic selection details and numerical tolerances are recorded in the supplementary notes.

The two levels represent the two distinct statements proved in Section~\ref{sec:rates}. The common fixed threshold is a finite-sample analogue of the single $\lambda>e_\infty$ in Theorem~\ref{thm:fixed-level}; it isolates the geometric effect of increasing width while keeping the sublevel unchanged. The width-dependent threshold mimics the $\lambda_m$ regime of Corollary~\ref{cor:transfer}, in which the sublevel follows improving approximation values. Because the empirical order statistic is not a certified value of $e(m)$, this second row illustrates the moving-level question but does not verify the hypothesis $\lambda_m\ge e(m)$. Neither choice is required by the path-finding algorithms; both are retained because they diagnose different theoretical regimes.

\begin{lemma}[Segmentwise mesh certificate]\label{lem:dss-certificate}
Consider the empirical objective on inputs $x_1,\ldots,x_N$ and the straight parameter segment
$\gamma(t)=((1-t)W^0+tW^1,(1-t)\theta^0+t\theta^1)$. Put
\begin{align}
K_\gamma={}&\frac{L}{N}\sum_{j=1}^N\left[
\sum_i\abs{\theta_i^1-\theta_i^0}
\max_{q\in\{0,1\}}\relu((w_i^q)^\top x_j)\right.\notag\\
&\left.\hspace{35mm}+
\sum_i\max_{q\in\{0,1\}}\abs{\theta_i^q}
\abs{(w_i^1-w_i^0)^\top x_j}
\right]
+\kappa\norm{\theta^1-\theta^0}_1.
\label{eq:segment-lipschitz}
\end{align}
Then $t\mapsto F_m(\gamma(t))$ is $K_\gamma$-Lipschitz. For $G\ge2$ and an equally spaced grid $t_g=g/(G-1)$,
\begin{equation}
\max_{t\in[0,1]}F_m(\gamma(t))
\le \max_{0\le g<G}F_m(\gamma(t_g))
+\frac{K_\gamma}{2(G-1)}.
\label{eq:mesh-certificate}
\end{equation}
The maximum of \eqref{eq:mesh-certificate} over the final segments certifies the returned piecewise-linear path in exact arithmetic.
\end{lemma}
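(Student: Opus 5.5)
The plan is to write $F_m(\gamma(t))=R(t)+\kappa\norm{\theta(t)}_1$ and bound the variation of each piece separately, where $R(t)=\frac1N\sum_j\mathcal L(y_j,\Phi_m(x_j;\gamma(t)))$. The penalty is easy. The map $t\mapsto\theta(t)$ is affine, so the triangle inequality gives $\abs{\norm{\theta(t)}_1-\norm{\theta(s)}_1}\le\norm{\theta^1-\theta^0}_1\abs{t-s}$. This accounts for the last term of \eqref{eq:segment-lipschitz}. For the risk, the $L$-Lipschitz property of $\mathcal L$ in the logit reduces the task to a Lipschitz bound on each scalar map $t\mapsto\Phi_m(x_j;\gamma(t))$, averaged over $j$.

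Fix $j$ and a neuron $i$, and write $a(t)=\theta_i(t)$ and $b(t)=\relu(w_i(t)^\top x_j)$. We decompose
\[
a(t)b(t)-a(s)b(s)=(a(t)-a(s))b(t)+a(s)(b(t)-b(s)).
\]
\begin{itemize}
\item First term. We have $\abs{a(t)-a(s)}=\abs{\theta_i^1-\theta_i^0}\abs{t-s}$. Since $w_i(t)^\top x_j$ is affine in $t$ and $\relu$ is convex, $b(t)\le\max_q\relu((w_i^q)^\top x_j)$.
\item Second term. By convexity of $\abs{\cdot}$ along the affine path, $\abs{a(s)}\le\max_q\abs{\theta_i^q}$. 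Since $\relu$ is $1$-Lipschitz, $\abs{b(t)-b(s)}\le\abs{(w_i^1-w_i^0)^\top x_j}\abs{t-s}$.
\end{itemize}
Summing over $i$, averaging over $j$ and multiplying by $L$ reproduces the bracketed part of $K_\gamma$. Hence $F_m\circ\gamma$ is $K_\gamma$-Lipschitz.

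For the mesh bound \eqref{eq:mesh-certificate}, take any $t\in[0,1]$. It lies in some cell $[t_g,t_{g+1}]$ of length $h=1/(G-1)$. Set $M=\max_g F_m(\gamma(t_g))$ and write $f=F_m\circ\gamma$. The Lipschitz property gives
\[
f(t)\le \min\{f(t_g)+K_\gamma(t-t_g),\,f(t_{g+1})+K_\gamma(t_{g+1}-t)\}\le M+K_\gamma\min\{t-t_g,t_{g+1}-t\}\le M+K_\gamma h/2,
\]
which is \eqref{eq:mesh-certificate}. A piecewise-linear path is a finite concatenation of such segments, and its maximum is the maximum over its segments. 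Applying the bound to each final segment and taking the largest value therefore certifies the whole path. This assumes grid values and the constants $K_\gamma$ are evaluated exactly.

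The only delicate point is the pointwise control of $b(t)$ and $\abs{a(s)}$ along the segment. It must use the endpoint maxima rather than bounds at $t$ itself, and this follows from convexity of $\relu$ composed with an affine map, and of $\abs{\cdot}$. Everything else is routine bookkeeping.
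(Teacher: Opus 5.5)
Your proposal is correct and matches the paper's proof step for step. The paper also adds and subtracts $\theta_i(s)\relu(w_i(t)^\top x_j)$ and bounds the two terms by endpoint maxima, using convexity of $\relu$ and of $\abs{\cdot}$ along the affine segment. It then handles the penalty by the triangle inequality and obtains the mesh bound from the fact that every $t$ lies within $1/(2(G-1))$ of a grid point.
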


\begin{proof}
For $s,t\in[0,1]$, add and subtract
$\theta_i(s)\relu(w_i(t)^\top x_j)$. Convexity of ReLU gives
$\relu(w_i(t)^\top x_j)\le\max_q\relu((w_i^q)^\top x_j)$, while
$\abs{\theta_i(s)}\le\max_q\abs{\theta_i^q}$ and ReLU is one-Lipschitz. Summing the resulting predictor bound, applying the $L$-Lipschitz loss, and using
$\abs{\norm{\theta(t)}_1-\norm{\theta(s)}_1}\le\abs{t-s}\norm{\theta^1-\theta^0}_1$
proves the first statement. Every $t$ lies within $1/(2(G-1))$ of a grid point, which proves \eqref{eq:mesh-certificate}.
\end{proof}

The certificate is analytic, but its reported terms are evaluated in double-precision floating-point arithmetic and are therefore understood up to floating-point error; interval arithmetic or outward rounding is not used.

We compare three continuous paths. Direct parameter interpolation is evaluated on 513 points with a global-Lipschitz mesh correction. Our \emph{certified DSS} is a modified version of the Greedy Dynamic String Sampling algorithm introduced by Freeman and Bruna \cite{FreemanBruna2017}. Like their algorithm, it locates a high point on an interpolated segment, optimizes the inserted model down toward the prescribed level, and recursively refines the two resulting segments. We add adaptive grids, an analytic Lipschitz mesh correction, explicit depth/training-limit statuses, and a final maximum over all returned segments. Consequently, ``certified DSS'' below should not be read as an unmodified reproduction of their pseudocode.

The \emph{proof-inspired path} is not DSS. It deterministically executes the representation-changing mechanism of Theorem~\ref{thm:finite}: cluster merging frees coordinates, a common narrow reference network is inserted, and both endpoints are moved to the same canonical embedding. Every segment is either convex output interpolation at fixed first-layer rows or a constant-predictor parameter motion, so its largest node objective is an exact upper bound for the entire continuous path. DSS is instead a numerical path-search heuristic that repeatedly trains high points of parameter-interpolation segments and does not implement the theorem's compression argument. Direct interpolation, certified DSS, and the proof-inspired path therefore answer three different diagnostic questions. The numerical gap reported below is the positive part of the corresponding path upper bound minus the selected level.

The experiment is an observation on a finite seeded distribution and finitely many endpoints. It does not estimate the supremum in $\SubBar_m(\lambda)$ and therefore is not a proof of Theorem~\ref{thm:fixed-level} or Corollary~\ref{cor:transfer}. The independent replicate is the statistical unit; the shaded ranges in Fig.~\ref{fig:path-diagnostics} are minima and maxima of the within-replicate worst values, rather than pair-level confidence intervals.

\subsection{Results}

\begin{figure*}[t]
\centering
\includegraphics[width=\textwidth]{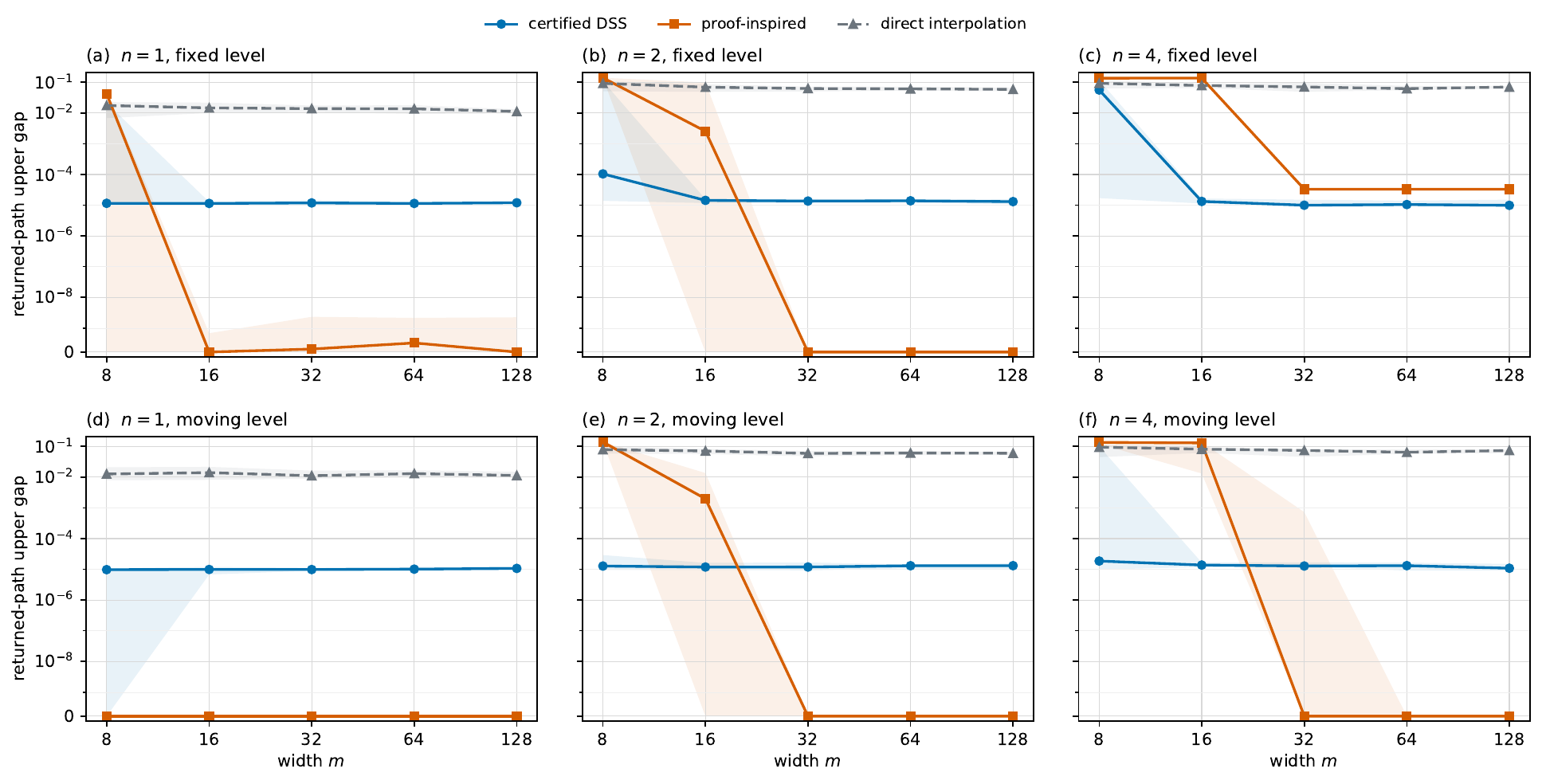}
\caption{Returned-path upper gaps over width. Columns are input dimensions $n=1,2,4$; the top row uses one fixed level across widths and the bottom row uses the width-dependent moving level. Each curve is the median of ten replicate-wise maxima over three disjoint pairs, and its shaded band is the full range of those ten maxima. ``Certified DSS'' is the modified recursive search with a mesh certificate; ``proof-inspired'' is the explicit compression--reference construction; ``direct interpolation'' is the straight parameter segment. Zero is displayed separately on a symmetric-log scale. The abrupt change from $m=8$ to $m=16$, followed by a DSS certificate floor near $10^{-5}$, should not be interpreted as a fitted power law.}
\label{fig:path-diagnostics}
\end{figure*}

Figure~\ref{fig:path-diagnostics} shows a reproducible width transition. At $m=8$, large path gaps occur for some pairs in dimensions two and four. For all 720 pairs with $m\ge16$, however, the smaller of the DSS and constructive upper gaps is at most $1.6584\times10^{-5}$, or at most $1.4391\times10^{-3}$ of the corresponding objective level. The constructive path lies exactly inside the selected level for $90.8\%$, $90.0\%$, and $35.8\%$ of these pairs for $n=1,2,4$, respectively. The one-dimensional saturation control is consistent with its two-ray dictionary: $\widehat e(m)$ changes by less than $2.1\times10^{-13}$ relatively from $m=8$ to $128$.

Direct interpolation remains substantially higher. Over the same 720 pairs its upper gap ranges from $2.70\times10^{-3}$ to $9.55\times10^{-2}$, with median $4.79\times10^{-2}$. The median direct-to-DSS gap ratio is $4.13\times10^3$ overall, while the dimension-wise ratios are reported in Table~\ref{tab:main-computation}. Thus the small returned barriers are a property of path construction, not of closeness along the straight parameter segment.

\begin{table*}[t]
\caption{Frozen main-run summary. All path quantities are upper gaps for the recorded endpoint pairs, not estimates of the uniform sublevel barrier. The three path columns use only $m\ge16$; the compression ratio uses all widths. ``Best path'' means the smaller of the certified DSS and exact constructive upper gaps; $B$ is the analytic cluster-merge increment bound.}
\label{tab:main-computation}
\centering
\scriptsize
\setlength{\tabcolsep}{3pt}
\begin{tabular}{@{}ccccccc@{}}
\toprule
$n$ & $\widehat e(8)$ & $\widehat e(128)$ & zero constructive & max best-path & median direct/DSS & max $\Delta_{\rm cmp}/B$ \\
 & & & gap (\%) & gap & ratio & \\
\midrule
1 & 0.117451 & 0.117451 & 90.8 & 2.338e-09 & 9.94e+02 & 0.089 \\
2 & 0.023544 & 0.023544 & 90.0 & 1.600e-05 & 4.60e+03 & 0.066 \\
4 & 0.010836 & 0.010721 & 35.8 & 1.658e-05 & 6.10e+03 & 0.045 \\
\bottomrule
\end{tabular}
\end{table*}

\begin{figure*}[t]
\centering
\includegraphics[width=\textwidth]{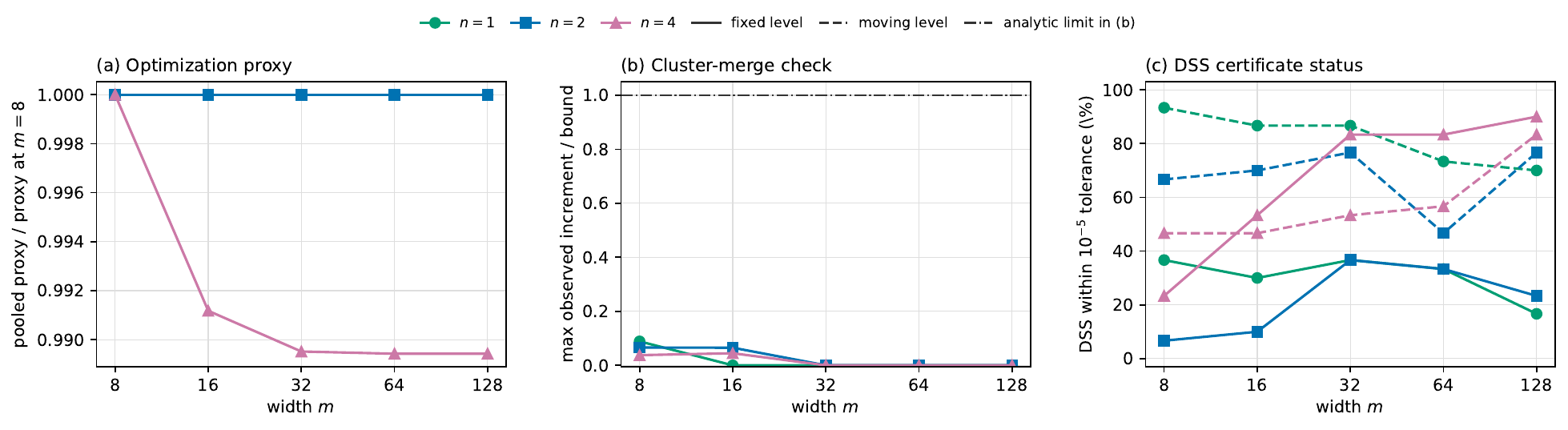}
\caption{Implementation and certificate checks. Color identifies $n$ throughout; in panel (c), solid and dashed curves identify fixed and moving levels. (a) The monotone pooled optimization proxy, normalized by its value at $m=8$; $n=1$ and $n=2$ saturate at the tested resolution, while $n=4$ decreases by $1.06\%$. (b) For each dimension and width, the largest positive cluster-merge increment over 80 trained endpoints divided by the explicit ambient-cube bound in Proposition~\ref{prop:cube-merge}; the dash-dotted horizontal line is the analytic limit one. (c) Fraction of the 30 certified-DSS records at each dimension, width, and level whose upper path value lies within numerical tolerance $10^{-5}$ of the selected level. Legends are placed above the panels to avoid obscuring curves.}
\label{fig:implementation-checks}
\end{figure*}

Figure~\ref{fig:implementation-checks} separates three checks that should not be conflated with the path gaps in Fig.~\ref{fig:path-diagnostics}. All 1200 trained endpoint records satisfy the first-layer constraint, with largest stored row norm $1+2.2\times10^{-16}$. All 1200 compression records satisfy the analytic increment bound. The largest observed-to-analytic increment ratios are $0.089$, $0.066$, and $0.045$ for dimensions one, two, and four. These ratios show that the worst-case covering estimate is conservative on the realized configurations and provide an implementation-level check of the analytic inequality. At $m\ge32$, every trained endpoint already has active support at most $m-4$, so the prescribed four-coordinate compression is a no-op and its increment is zero. These zeros diagnose optimizer-induced sparsity; they are not an empirical estimate of the covering exponent.

To exercise cluster merging rather than inherit this proximal sparsity, we added a dense-representation stress test on the six fixed-level endpoints selected in every replicate--width group with $m\ge16$. Each endpoint is first sphericalized exactly and every active coefficient is split among duplicate copies until all $m$ coordinates are active. In dimensions two and four the copies are then given a deterministic tangent perturbation, choosing the largest prespecified amplitude that remains within the recorded level up to its $10^{-5}$ tolerance. A deterministic nearest-cluster rule merges enough coefficients to free four coordinates. The construction is representation-level: it controls the tested density while preserving the architecture and starting from the same trained functions, but it is not a new sample of optimizer outputs.

\begin{figure*}[t]
\centering
\includegraphics[width=\textwidth]{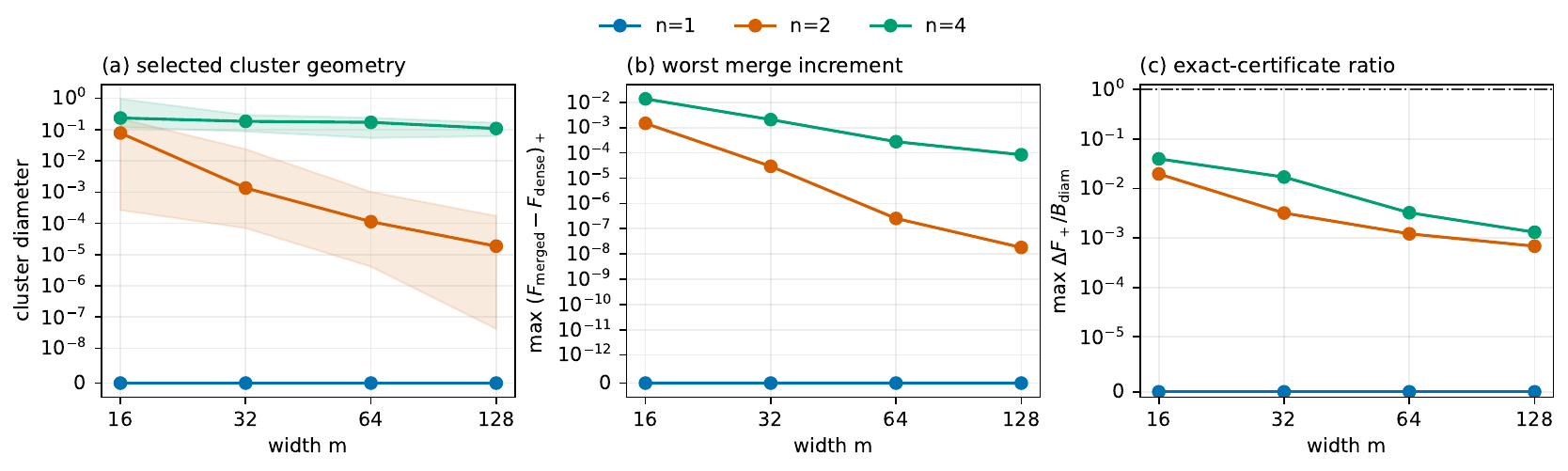}
\caption{Dense-endpoint cluster-merging stress test on 720 fixed-level Huber endpoint representations at widths $m\ge16$. Before merging, all coordinates are active; at least four are removed in every record. (a) Median selected cluster diameter with the full range across 60 endpoints at each dimension and width. (b) Largest positive objective increment. (c) Largest ratio to the individual certificate $B_{\rm diam}=LD_X\norm{\theta}_1\operatorname{diam}(Q)$; the dash-dotted line marks one. The zero one-dimensional curves reflect the exact two-ray dictionary. This constructed diagnostic checks nontrivial merging and its analytic inequality, not an asymptotic exponent.}
\label{fig:dense-stress}
\end{figure*}

Figure~\ref{fig:dense-stress} removes at least four active coefficients in every one of the 720 dense records. No individual diameter certificate is violated. The largest positive merge increment is $1.3668\times10^{-2}$, attained at $n=4,m=16$, and the largest ratio to its certificate is $0.0396$. At $n=2$, the median selected diameter falls from $7.81\times10^{-2}$ at $m=16$ to $1.86\times10^{-5}$ at $m=128$; at $n=4$ it falls from $0.236$ to $0.109$. The finite construction and nearest-cluster selection differ from the worst-case spherical covering proof, so these decreases are reported only as mechanism checks, not as estimates of the rate in Theorem~\ref{thm:fixed-level}.

Corrected DSS returns a finite path and an analytically mesh-corrected upper value even when recursion reaches its depth limit. For $m\ge16$, $56.5\%$ of its pair records are certified within the designated $10^{-5}$ tolerance; the remaining $43.5\%$ contain at least one unresolved final segment, but their reported gaps already include the segmentwise Lipschitz correction. The median certified DSS gap is $9.94\times10^{-6}$. This tolerance-scale plateau, together with the exact zeros of the constructive path, makes a log--log slope unstable: the finite-range fits are dominated by the $m=8$ to $m=16$ transition. We therefore do not use fitted exponents as evidence for the asymptotic rates.

\subsection{Loss robustness: binary cross-entropy}

To test whether the diagnostic is specific to the Huber regression objective, we repeated the complete 900-pair protocol with deterministic labels $y=\mathbf 1\{z_{\rm teacher}\ge0\}$ and the binary cross-entropy loss in
\eqref{eq:bce-logit}. The architecture, inputs, teacher realizations, regularization, optimization budgets, reference-width candidates, levels, pair counts, and path-certification budgets were otherwise unchanged. The positive-class fractions were $0.505$, $0.646$, and $0.583$ for $n=1,2,4$. This rerun directly exercises a second loss covered by Assumption~\ref{ass:standing}, since its global logit Lipschitz constant is one. All 1200 BCE endpoint models remain in the first-layer unit ball, all 1200 compression records satisfy the analytic bound, and all 300 groups contain three disjoint pairs whose endpoints lie in their recorded sublevel.

Raw Huber and cross-entropy objectives have different scales, so Fig.~\ref{fig:loss-robustness} reports the better of the certified DSS and constructive upper gaps divided by the selected level. Fixed and moving levels are shown separately. Each point is the median of ten replicate-wise maxima over three disjoint pairs, and each band is the full range across those replicates; thus the plot does not treat the pair rows as independent observations.

The fixed-level BCE row is qualitatively consistent with a width transition, but is less sharply resolved than Huber: over its 360 pairs at $m\ge16$, the maximal best upper gap is $7.13\times10^{-3}$, or $5.03\times10^{-2}$ of the selected level. The moving-level BCE row is more demanding, with maximal values $1.02\times10^{-1}$ and $7.96\times10^{-1}$, respectively; the largest case occurs at $n=4,m=64$. The proof-inspired path is exactly inside the recorded level for $51.7\%$ of fixed and $57.5\%$ of moving BCE pairs, while DSS is within its $10^{-5}$ tolerance for $31.4\%$ and $23.3\%$.

\begin{figure*}[b]
\centering
\includegraphics[width=\textwidth]{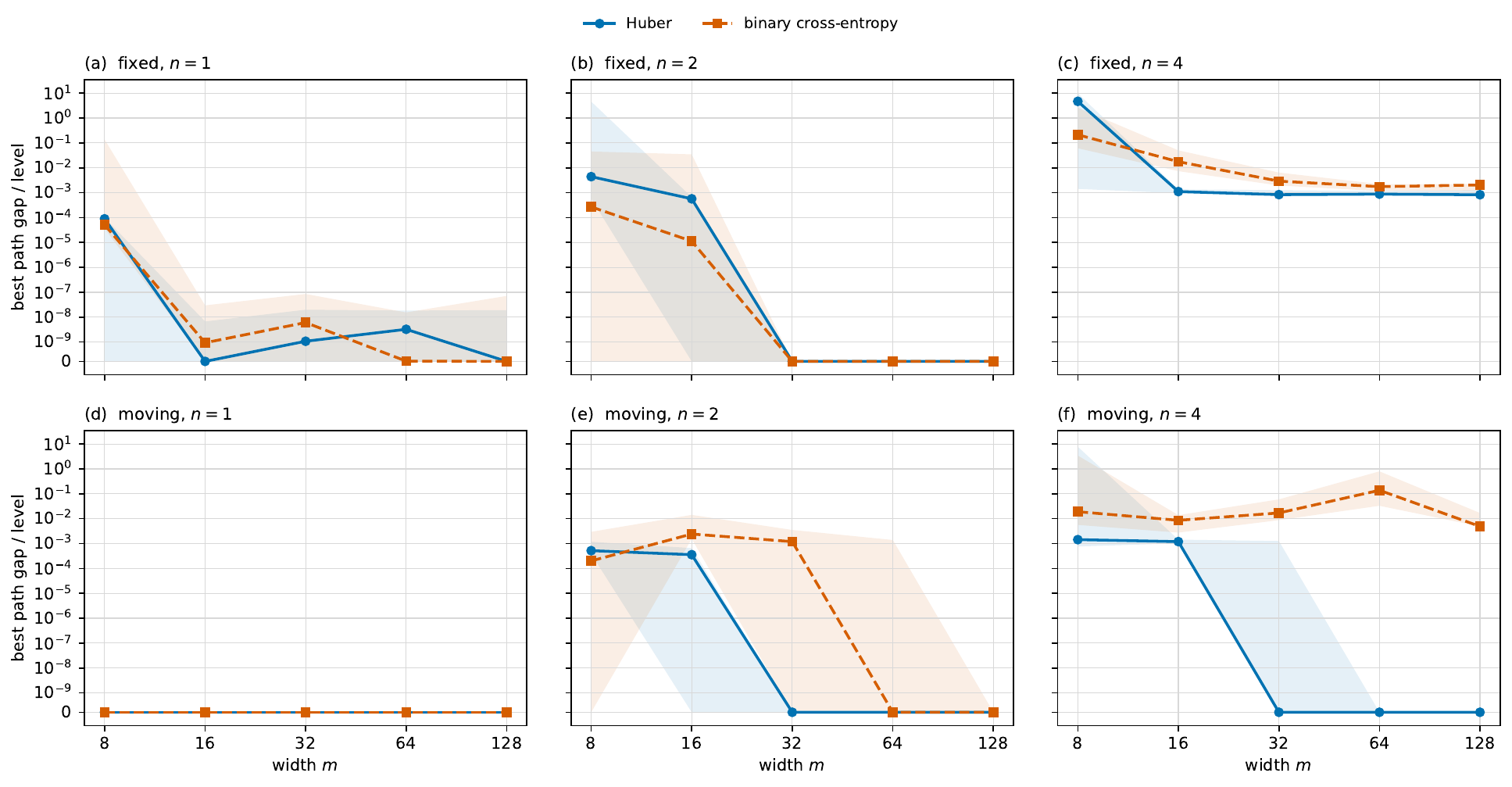}
\caption{Loss-robustness diagnostic. Columns are input dimensions and rows separate the fixed and moving empirical sublevels. Curves show the smaller of the certified DSS and exact constructive upper gaps, normalized by the selected level. Each marker is the median of ten replicate-wise worst values over three disjoint pairs; shading is the full replicate range. The symmetric-log scale displays exact zeros without replacing them by an artificial plotting floor. The comparison is descriptive: different losses induce different objective geometries, and the moving empirical level is not a certified value of $e(m)$.}
\label{fig:loss-robustness}
\end{figure*}

\begin{table*}[t]
\caption{Loss-robustness summary over all dimensions at widths $m\ge16$. Each loss--level row contains 360 recorded pairs. ``Best'' is the smaller certified DSS or constructive upper gap.}
\label{tab:loss-robustness}
\centering
\scriptsize
\setlength{\tabcolsep}{2.2pt}
\begin{tabular}{@{}llccccc@{}}
\toprule
Loss & level & max gap & max/level & proof zero (\%) & DSS tol. (\%) & med. direct/level \\
\midrule
Huber & fixed & 1.649e-05 & 1.377e-03 & 55.8 & 44.2 & 2.246e+00 \\
Huber & moving & 1.658e-05 & 1.439e-03 & 88.6 & 68.9 & 2.138e+00 \\
BCE (logits) & fixed & 7.134e-03 & 5.032e-02 & 51.7 & 31.4 & 2.317e-01 \\
BCE (logits) & moving & 1.016e-01 & 7.956e-01 & 57.5 & 23.3 & 5.991e-01 \\
\bottomrule
\end{tabular}
\end{table*}

This weaker moving-level outcome is informative rather than contradictory. Corollaries~\ref{cor:qualitative}--\ref{cor:near-optimal} assume a certified sequence $\lambda_m\ge e(m)$ and balance a growing reference width against compression. Here $\lambda_m$ is only an optimizer-based proxy, the endpoints are a finite sample, and the numerical algorithms have fixed depth and training budgets. Moreover, a returned-path upper gap is not a lower bound on the infimal barrier: a large value can diagnose failure of the searched constructions without establishing that every path has a large barrier. Consequently, the BCE rerun supports the loss-side generality and implementation invariants, but it is deliberately not presented as empirical verification of the moving-level rate. This is the practical reason for retaining both rows: the fixed row is the closer diagnostic for Theorem~\ref{thm:fixed-level}, whereas the moving row exposes the additional approximation and optimization demands of the moving-level corollaries.

\section{Discussion and Limitations}\label{sec:discussion}

The main theorem is a statement about constrained one-hidden-layer ReLU networks, expected regularized risk, and scalar logits. The first-layer constraint is essential. With an unrestricted first layer and a penalty only on the output layer, positive homogeneity permits the same predictor to be represented with arbitrarily small regularization cost.

The global Lipschitz hypothesis is also essential to the stated bridge estimate. It includes common binary classification losses in logits but excludes mean squared error on an unbounded logit domain. Locally Lipschitz or polynomial-growth losses may admit variants under additional moment and pathwise-logit bounds; those extensions are not asserted here.

The rate $m^{-1/(n-1)}$ comes from a worst-case covering of the spherical active-direction dictionary. It can still be pessimistic when the active rows occupy a set of smaller metric dimension. A data- or dictionary-dependent entropy may sharpen the result further, but would require a uniform structural hypothesis on every endpoint in the sublevel.

Lemma~\ref{lem:maurey} supplies the universal regularized-value rate $O(l^{-1/2})$ without a separate target-class assumption. Corollary~\ref{cor:transfer} remains useful because sharper function-approximation results may improve the landscape exponent, but importing them requires matching the bias-free dictionary, data distribution, loss, and regularization rather than quoting an incompatible approximation norm.

Finally, barrier decay does not by itself imply rapid optimization. The proof constructs an admissible path but gives neither a conditioning estimate nor a guarantee that gradient-based training finds that path.

\section{Conclusion}\label{sec:conclusion}

For a constrained shallow ReLU model, convexity in the output layer and Lipschitz continuity of the loss are sufficient to turn robust neuron compression into a quantitative pathwise barrier bound. A monotone sphericalization path and cluster merging give uniform $O(m^{-1/(n-1)})$ thickening of every fixed sublevel above the limiting approximation value for $n\ge2$, with exact connectivity for every $m\ge4$ in one dimension. Objective-value sampling proves $e(l)-e_\infty=O(l^{-1/2})$ inside the model and, under the standing assumptions, yields the near-optimal barrier rate $O(m^{-1/(n+1)})$. The resulting statement is a concrete bridge from approximation theory to loss-landscape topology, with its architectural and loss assumptions explicit.

\backmatter

\section*{Acknowledgements}
The author is grateful to Prof. Dmitry A. Kamaev for valuable discussions and guidance.

\section*{Declarations}

\textbf{Funding.} No funding was received for conducting this study.

\textbf{Competing Interests.} The author declares no competing interests.

\textbf{Ethics Approval.} Not applicable.

\textbf{Consent to Participate.} Not applicable.

\textbf{Consent for Publication.} Not applicable.

\textbf{Data Availability.} No external data are used. The synthetic empirical distributions and frozen endpoint and path arrays underlying the reported numerical results are available in the versioned repository snapshot \href{https://github.com/Safreliy/energy-gap-estimate/tree/d645717a7f82658d913703700549e609c79b0b25}{commit \texttt{d645717}}.

\textbf{Code Availability.} The theory-aligned experiment, invariant tests, versioned configurations, proof-audit notes, frozen result tables, and endpoint/path array archives are available at \url{https://github.com/Safreliy/energy-gap-estimate}. The corrected historical notebooks and standalone DSS implementation are isolated under \texttt{legacy/}; they are not used to generate the reported results. The repository contains the frozen canonical artifacts used to regenerate every reported table and figure.

\textbf{Author Contribution.} Saveliy Baturin conducted the analysis, implemented the computational diagnostic, and wrote the manuscript.

\bibliography{bibliography}

\end{document}